\def\utc{\mbox{$\;\overset{c}{=}\;$}}
\begin{document}
\pagestyle{headings}
\mainmatter
\def\ECCV18SubNumber{2281}  

\newcommand{\FP}[1] {\emph{\textcolor{blue}{(FP: #1)}}}
\newcommand{\IB}[1] {\emph{\textcolor{red}{(IB: #1)}}}
\def\S{\mathbf{S}}

\title{On Regularized Losses \\for Weakly-supervised CNN Segmentation } 

\titlerunning{On Regularized Losses for Weakly-supervised CNN Segmentation}

\authorrunning{M.~Tang, F.~Perazzi, A.~Djelouah, I.~Ben Ayed, C.~Schroers, Y.~Boykov}

\author{Meng Tang$^\dag$ \hspace{3ex}  Federico Perazzi$^*$ \hspace{3ex}  Abdelaziz Djelouah$^*$ \\ 
Ismail Ben Ayed$^\ddag$ \hspace{3ex}    Christopher Schroers$^*$       \hspace{3ex}  Yuri Boykov$^\dag$}
\institute{\footnotesize $^\dag$Computer Science, University of Waterloo, Canada  \\ 
$^*$Disney Research, Z\"{u}rich, Switzerland  \\
$^\ddag$\'Ecole de Technologie Sup\'erieure, University of Quebec, Canada}

\maketitle
\begin{abstract}
Minimization of regularized losses is a principled approach to weak supervision well-established in deep learning, in general.
However, it is largely overlooked in semantic segmentation currently dominated by methods mimicking full
supervision via ``fake'' fully-labeled training masks (proposals) generated from available partial input. To obtain such full masks
the typical methods explicitly use standard regularization techniques for ``shallow'' segmentation, e.g. graph cuts or dense CRFs.
In contrast, we integrate such standard regularizers directly into the loss functions over partial input.
This approach simplifies weakly-supervised training by avoiding extra MRF/CRF inference steps or layers
explicitly generating full masks, while improving both the quality and efficiency of training.
This paper proposes and experimentally compares different losses integrating MRF/CRF regularization terms. We juxtapose our regularized losses with earlier proposal-generation methods using explicit regularization steps or layers. Our approach achieves state-of-the-art accuracy
in semantic segmentation with near full-supervision quality.

\end{abstract}

\section{Introduction}
We advocate {\em regularized loss} functions for weakly-supervised training of semantic CNN segmentation.
The use of unsupervised loss terms acting as regularizers on the output of deep-learning architectures is a principled approach 
to exploit structure similarity of partially labeled data \cite{weston2012deep,goodfellow2016deep}. 
Surprisingly, this general idea was largely overlooked in weakly-supervised CNN segmentation where current methods often
introduce computationally expensive MRF/CRF layers or post-processing inference steps generating ``fake'' full masks 
from partial input. 

We propose to use (relaxations of) MRF/CRF terms directly inside the loss avoiding explicit 
guessing of full training masks.
This approach follows well-established ideas for weak supervision in deep learning \cite{weston2012deep,goodfellow2016deep} and continues
our recent work \cite{ncloss:cvpr18} that proposed the integration of standard objectives in shallow\footnote{In this paper ``shallow''
refers to standard segmentation methods unrelated to CNNs.} segmentation directly into loss functions. While \cite{ncloss:cvpr18}
is entirely focused on the {\em normalized cut loss} motivated by a popular  balanced segmentation criterion \cite{Shi2000},
 we now study a different class of {\em regularized losses} including (relaxations of) standard MRF/CRF potentials.
While they are common as shallow regularizers \cite{BVZ:PAMI01,BJ:01,GrabCuts:SIGGRAPH04,koltun:NIPS11} 
or as trainable layers \cite{zheng2015conditional}, they were never used directly as losses. 

We propose and evaluate several new losses motivated by MRF/CRF potentials
and their combination with balanced partitioning criteria \cite{NC-MRF:eccv16}. Such losses can be adapted to many forms of 
weak (or semi-) supervision based on diverse existing MRF/CRF  formulations for interactive graph cut segmentation.
But, the scope of this paper is limited to training with partial (user scribble) masks where 
regularized losses combined with cross entropy over the partial masks achieve the state-of-the-art close to full-supervision quality.

Besides basic Potts model \cite{BVZ:PAMI01}, we use popular fully connected pairwise CRF potentials of 
Kr{\"{a}}henb{\"{u}}hl and Koltun \cite{koltun:NIPS11}, often referred to as {\em dense CRF}.
In conjunction with CNNs dense CRFs have become the de-facto choice for semantic segmentation
in the contexts of fully \cite{deeplab,CRFlayersjournal,zheng2015conditional} and weakly/semi
\cite{kolesnikov2016seed,papandreou2015weakly,deepcut} supervised learning. 
For instance, DeepLab \cite{deeplab} popularized dense CRF as a post-processing step. 
In fully supervised setting, integrating the unary scores of a CNN classifier and 
the pairwise potentials of dense CRF achieve competitive performances \cite{CRFlayersjournal}. 
This is facilitated by fast mean-field  inference techniques for dense CRF based on high-dimensional filtering \cite{adams2010fast}. 

Weakly supervised semantic segmentation is commonly addressed by mimicking full supervision via 
synthesizing fully-labeled training masks (proposals) from the available partial inputs 
\cite{deepcut,papandreou2015weakly,scribblesup}. 
These schemes typically iterate two steps: CNN training and proposal generation via regularization-based 
shallow interactive segmentation, e.g. graph cut \cite{scribblesup} or dense CRF mean-field inference  \cite{deepcut,papandreou2015weakly}.
In contrast, our approach avoids explicit inference steps by integrating shallow regularizers directly into the loss functions.
Section \ref{sec:proposal-connection} makes some interesting connections between proposal-generation and our regularized losses.

For simplicity, this paper uses a very basic quadratic relaxation of discrete MRF/CRF potentials, even though
there are many alternatives, e.g. TV-based \cite{ChambolleDarbon:TV2009} and convex formulations \cite{Nikolova:SIAM06,pock:CVPR09}, 
$L_p$ relaxations \cite{Couprie:PAMI11}, LP and other relaxations \cite{kumar2016,kumar2017}.
Evaluation of different relaxations in the context of regularized weak supervision losses is left for future work.
Our main contributions are:
\begin{itemize}
\item We propose and evaluate several {\em regularized losses} for weakly supervised CNN segmentation based on
Potts \cite{BVZ:PAMI01}, dense CRF \cite{koltun:NIPS11}, and kernel cut \cite{NC-MRF:eccv16} regularizers (Sec.\ref{sec:overview}).
Our approach avoids explicit inference steps as in proposal-based methods.
This continues the study of losses motivated by standard  shallow segmentation energies started in 
\cite{ncloss:cvpr18} with {\em normalized cut loss}. 
\item We show that iterative proposal-generation schemes for weak supervision, which alternate CNN learning and mean-field inference, 
can be viewed as an approximate alternating direction optimization of regularized losses (Sec.\ref{sec:proposal-connection}).
\item Comprehensive experiments (Sec.\ref{sec:experiments}) with our regularized weakly supervised losses  show
(1) state-of-the-art performance for weakly supervised CNN segmentation reaching near full-supervision accuracy and
(2) better quality and efficiency than proposal generating methods or normalized cut loss \cite{ncloss:cvpr18}.
Alternating schemes (proposal generation) give higher loss at convergence.
\end{itemize}
\section{Our Regularized Semi-supervised Losses}
\label{sec:overview}

This section introduces our regularized losses for weakly-supervised segmentation. 
In general, the use of regularized losses is a well-established approach in semi-supervised deep learning \cite{weston2012deep,goodfellow2016deep}. We advocate this principle for semantic CNN segmentation, 
propose specific shallow regularizers for such losses, and discuss their properties.

Assuming image $I$ and its {\textit{partial}} ground truth labeling or mask $Y$, let $f_\theta(I)$ be the output of a segmentation network parameterized by $\theta$.  In general, CNN training with our joint regularized loss 
corresponds to optimization problem of the following form
\begin{equation}
\min_\theta \ell (f_\theta (I),Y) \;\;+ \;\;\lambda\cdot  R(f_\theta(I))
\end{equation}
where $\ell (S,Y)$ is a ground truth loss and $R(S)$ is a regularization term or regularization loss. 
Both losses have argument $S=f_\theta(I)\in [0,1]^{|\Omega|\times K}$, which is $K$-way softmax segmentation 
generated by a network. Using cross entropy over partial labeling as the ground truth loss, 
we have the following joint \textit{regularized semi-supervised loss}
\begin{equation} \label{eq:jointloss}
\sum_{p\in\Omega_{\cal L}} {H}(Y_p,S_p) \;\;+\;\; \lambda \cdot R(S)
\end{equation}
where $\Omega_{\cal L}\subset \Omega$ is the set of labeled pixels and $H(Y_p,S_p)=-\sum_k-Y_p^k \log S_p^k$ is the cross entropy between network predicted segmentation $S_p\in [0,1]^K$ (a row of matrix $S$ corresponding to point $p$) 
and ground truth labeling $Y_p\in \{0,1\}^K$.

In principle, any differentiable function $R(S)$ can be used as a loss. This paper studies (relaxations of) 
regularizers from shallow segmentation as loss functions. Section \ref{sec:crfloss} details our MRF/CRF loss and its implementation. In Section \ref{sec:kernelcutloss}, we propose {\em kernel cut loss} 
combining CRF with normalized cut terms and justify this combination.

\subsection{Potts/CRF Losses}
\label{sec:crfloss}
Assuming that segmentation variables  $S_p$ are restricted to binary class indicators $S_p\in \{0,1\}^K$,
the standard Potts model \cite{BVZ:PAMI01} could be represented via Iverson brackets  $[\cdot]$,
as on the left hand side below
\begin{equation} \label{eq:relaxation}
\sum_{p,q\in\Omega} W_{pq}\;[S_p \neq S_q]\;\;=\;\;\sum_{p,q\in\Omega} W_{pq}\;\|S_p - S_q\|^2,
\end{equation}
where $W=[W_{pq}]$ is a matrix of pairwise discontinuity costs or an {\em affinity matrix}. 
The right hand side above is a particularly straightforward quadratic relaxation of 
the Potts model that works for relaxed $S_p\in [0,1]^K$ corresponding to a typical soft-max output of CNNs.
In fact, this quadratic function is very common in the general context of regularized weakly supervised losses
in deep learning \cite{weston2012deep}.

As discussed in the introduction, this relaxation is not unique \cite{ChambolleDarbon:TV2009,Nikolova:SIAM06,pock:CVPR09,Couprie:PAMI11,kumar2016}. 
We use slightly different quadratic relaxation of the Potts model
\begin{equation} \label{eq:crfloss}
R_{CRF}(S)=\sum_{k} {S^{k'}{W}({\mathbf1}-S^k)}
\end{equation}
expressed in terms of support vectors for each label $k$, i.e. columns of the segmentation matrix 
$S^k\in [0,1]^{|\Omega|}$. For discrete segment indicators \eqref{eq:crfloss} gives the cost of a cut between segments,
same as the Potts model on the left hand side of \eqref{eq:relaxation}, 
but it differs from the relaxation of the right hand side of \eqref{eq:relaxation}.

The affinity matrix $W$ can be sparse or dense. Sparse $W$ commonly appears in the context of 
boundary regularization and edge alignment in shallow segmentation \cite{BJ:01}. 
With dense Gaussian kernel $W_{pq}$ \eqref{eq:crfloss} is a relaxation of DenseCRF \cite{Kraehenbuehl2013}. 
The implementation details including fast computation of the gradient \eqref{eq:crfgradient}
for CRF loss with dense Gaussian kernel is described in Sec. \ref{sec:experiments}.

\subsection{Kernel Cut Loss}
\label{sec:kernelcutloss}

Besides the CRF loss \eqref{eq:crfloss}, we also propose its combination with normalized cut loss \cite{ncloss:cvpr18} where each term is a ratio of a segment's cut cost (Potts model) over the segment's weighted size (normalization)
\begin{equation} \label{eq:ncloss}
R_{NC}(S)=\sum_{k} \frac{S^{k'}\hat{W}(1-S^k)}{d'S^k},
\end{equation}
where $d=\hat{W}{\bf 1}$ are node degrees. Note that the affinity matrix $\hat{W}$ for normalized cut can be different from $W$ in CRF \eqref{eq:crfloss}.
The combined {\em kernel cut loss} is simply a linear combination of \eqref{eq:crfloss} and \eqref{eq:ncloss}
\begin{equation} \label{eq:kcloss}
R_{KC}(S)=\sum_{k} {S^{k'}{W}({\mathbf1}-S^k)} \;\; + \;\;\gamma \sum_{k} \frac{S^{k'}\hat{W}(1-S^k)}{d'S^k}
\end{equation}
which is motivated by {\em kernel cut} shallow segmentation \cite{NC-MRF:eccv16} with 
complementary benefits of balanced normalized cut  partitioning and  object boundary regularization or edge alignment
as in Potts model. 
While the kernel cut loss is a high-order objective, its gradient \eqref{eq:kcgradient} can be efficiently
implemented, see Sec. \ref{sec:experiments}.



This paper compares experimentally CRF, normalized cut and kernel cut losses for weakly supervised segmentation. In our experiments, the best weakly supervised segmentation is achieved with kernel cut loss.

Note that standard normalized cut and CRF objectives in shallow segmentation 
require fairly different optimization techniques (e.g. spectral relaxation or graph cuts), 
but the standard gradient descent approach for optimizing losses during
CNN training allows significant flexibility in including different regularization terms, as long as there is a reasonable relaxation.

\section{Connecting proposals generation and loss optimization}
\label{sec:proposal-connection}

The main stream of weakly-supervised methods generate segmentation proposals and train with such "fake" ground truth \cite{scribblesup,Chandraker2017,simpledoesit,kolesnikov2016seed,papandreou2015weakly,dai2015boxsup}. In fact, many off-line shallow interactive segmentation techniques can be used to propagate labels and generate masks, e.g. graph cuts \cite{BJ:01,GrabCuts:SIGGRAPH04}, random walker \cite{grady2006random,Couprie:PAMI11}, etc. However, training is vulnerable to mistakes in the proposals. While alternating proposal generation and network training \cite{scribblesup} may improve the quality of the proposals, errors reinforce themselves in such self-taught learning scheme \cite{sslbook}. Rather than training networks to fit potential errors, our regularized semi-supervised loss framework is more direct and principled \cite{sslbook,weston2012deep}.

In this section, we show that proposal methods can be viewed as {\em approximate} alternating direction method (ADM) for optimization \cite{Boyd2011}, which does not account directly for network variables $\theta$ in the ADM splitting. This optimization insight suggests that expressing very popular regularization terms, for instance, dense CRF, explicitly in terms of the network variables and performing direct back-propagation could be a better optimization alternative to the existing proposal generation methods, in both the quality of the obtained solutions and efficiency. Our optimization results confirm this, e.g. see the CRF loss plot in Fig. \ref{fig:directlossvstrustregion} and the training times in Table \ref{tab:sectag}.

We consider proposal-generation schemes iterating between two steps, \textbf{network training} and \textbf{proposal generation}. 
Then alternation can happen either when training converges or online for each batch. At each iteration,
the first step learns the network parameters $\theta$  from a given (fixed) ground-truth proposal $\tilde{X}$ computed at the previous iteration. 
This amounts to updating the K-way softmax segmentation $S$ to $\tilde{S} \equiv f_{\tilde{\theta}}(I)$ by minimizing the following 
proposal-based cross entropy with respect to parameters $\theta$ via standard back-propagation:
\begin{equation} \label{eq:proposal-cross-entropy}
\tilde{\theta} = \arg \min_{\theta}   \sum_{p\in\Omega_{{\cal L}}} {H}(Y_p,S_p) + \sum_{p\in\Omega_{{\cal U}}} {H}(\tilde{X}_p,S_p) \;\;\;\;\;\text{for}\;\;\;\;\;
S \equiv f_{\theta}(I)
\end{equation}
where $\tilde{X}_p \in [0,1]^K$ are the ground-truth proposals for unlabeled pixels $p \in \Omega_{{\cal U}}$. 
Mask $\tilde{X}_p$ is constrained to be equal to $Y_p$ for labeled pixels $p \in \Omega_{\cal L}$.
The second step fixes the network output $\tilde{S}$ and finds the next ground-truth proposal by minimizing regularization functionals
that are standard in shallow segmentation:
\begin{equation}
\label{proposal-CRF}
\min_{X \in [0,1]^{|\Omega|\times K}} \sum_{p\in\Omega_{{\cal U}}} {H}(X_p,\tilde{S}_p) + \lambda R(X)
\end{equation}
where $X_p \in [0,1]^k$ denotes latent pixel labels within the probability simplex.
Note that for fixed $\tilde{S}$ the cross entropy terms ${H}(X_p,\tilde{S}_p)$ in \eqref{proposal-CRF} are unary potentials for $X$.
When $R$ corresponds to dense CRF, optimization of \eqref{proposal-CRF} is facilitated by fast mean-field  inference
techniques \cite{koltun:NIPS11,Baque2016} significantly reducing the computational times via parallel updates of  variables $X_p$ 
and high-dimensional filtering \cite{adams2010fast}. 
Appendix \ref{sec:meanfield} shows that mean-field algorithms can be equivalently interpreted as a {\em convex-concave} approach
to optimizing the following objective 
\begin{equation}
\label{proposal-CRF-negative-entropie}
\min_{X \in [0,1]^{|\Omega|\times K}}  \sum_{p\in\Omega_{{\cal U}}} {H}(X_p,\tilde{S}_p) + \lambda R(X) -\sum_{p\in\Omega_{{\cal U}}} H(X_p)
\end{equation}
combining \eqref{proposal-CRF} and negative entropies $H(X_p) = -\sum_k X_p^k \log X_p^k$ 
that act as a simplex {\em barrier} for variables $X_p$. This yields closed-form independent (parallel) 
updates of variables $X_p$, while ensuring convergence under some 
conditions\footnote{Parallel updates are guaranteed to converge for concave CRF models, e.g. Potts \cite{Kraehenbuehl2013}.}.

\begin{proposition}
Proposal methods alternating steps \eqref{proposal-CRF-negative-entropie} and \eqref{eq:proposal-cross-entropy} can be viewed 
as approximate {\em alternating direction method} (ADM)\footnote{In its basic form, {\em alternating direction method} 
transforms problem $\min_x f(x) + g(x)$ into $\min_{x,y} f(x) + g(y) \; \mbox{s.t} \; x=y$ and alternates optimization over $x$ and $y$. 
This may work if optimizing $f$ and $g$ seperately is easier than the original problem.} 
\cite{Boyd2011} for optimizing our {\em regularized loss}  \eqref{eq:jointloss} using the following decomposition of the problem:
\begin{equation}
\label{Alternating-direction-KL}
\min_{\theta, X \in [0,1]^{|\Omega|\times K}}  \sum_{p\in\Omega_{{\cal L}}} {H}(Y_p,S_p) + \lambda R(X) + \sum_{p\in\Omega_{{\cal U}}} KL(X_p|S_p)
\end{equation}
where $KL$ denotes the Kullback-Leibler divergence.
\end{proposition}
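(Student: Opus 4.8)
The plan is to exhibit the decomposed objective \eqref{Alternating-direction-KL} as an ADM-style splitting of the regularized loss \eqref{eq:jointloss}, and then to verify that the two block updates of this splitting reproduce, term for term, the two proposal-generation steps. First I would read \eqref{Alternating-direction-KL} as the consensus reformulation promised in the footnote: the argument of the regularizer is copied into an auxiliary field $X$, and $X$ is tied to the network output $S = f_\theta(I)$ on the unlabeled pixels through $\sum_{p\in\Omega_{\cal U}} KL(X_p|S_p)$ (with the labeled entries of $X$ pinned to $Y$). The sanity check here is that at consensus, i.e. when $X_p = S_p$ on $\Omega_{\cal U}$, the coupling vanishes and $R(X)=R(S)$, so \eqref{Alternating-direction-KL} collapses back to \eqref{eq:jointloss}. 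This identifies the $KL$ term as a Bregman proximal surrogate for the ADM constraint $X=S$, with the probability simplex as the natural geometry for softmax outputs.

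Next I would run the two ADM block minimizations and match each to a proposal step. The only algebraic fact I need is the identity $KL(X_p|S_p) = H(X_p,S_p) - H(X_p)$, i.e. the KL divergence splits into a cross entropy and a negative entropy. For the first block, freezing $X=\tilde{X}$ and minimizing \eqref{Alternating-direction-KL} over $\theta$ leaves $\sum_{p\in\Omega_{\cal L}} H(Y_p,S_p) + \sum_{p\in\Omega_{\cal U}} H(\tilde{X}_p,S_p)$ after discarding the $\theta$-independent entropies $H(\tilde{X}_p)$ and the constant $\lambda R(\tilde{X})$; this is exactly the proposal-based cross entropy \eqref{eq:proposal-cross-entropy} solved by back-propagation. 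For the second block, freezing $S=\tilde{S}$ and minimizing over $X$ leaves $\sum_{p\in\Omega_{\cal U}} H(X_p,\tilde{S}_p) + \lambda R(X) - \sum_{p\in\Omega_{\cal U}} H(X_p)$, which is precisely the entropy-barrier objective \eqref{proposal-CRF-negative-entropie} that, by Appendix \ref{sec:meanfield}, mean-field inference optimizes. Thus the two alternating phases of a proposal method coincide with the two coordinate updates of ADM on \eqref{Alternating-direction-KL}.

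Finally I would make explicit the sense in which the correspondence is only \emph{approximate}, which I expect to be the delicate part. The KL coupling is used with a fixed weight and without the dual (multiplier) ascent of a genuine augmented-Lagrangian ADM, so the scheme is really a penalty / Bregman variant that does not drive the consensus $X=S$ to exact equality; consequently minimizing out $X$ does not return $\lambda R(S)$ but only a smoothed, entropy-regularized Bregman envelope of $\lambda R$ at $S$, and the fixed point of the alternation need not be stationary for \eqref{eq:jointloss}. A secondary source of approximation is that the $X$-subproblem \eqref{proposal-CRF-negative-entropie} is solved only inexactly by a few mean-field iterations rather than to optimality. I would close by stating the correspondence at the level of the update rules, namely that each proposal phase equals the exact block minimizer of \eqref{Alternating-direction-KL}, which is all the proposition asserts, while flagging these two gaps as the precise reason for the qualifier ``approximate''.
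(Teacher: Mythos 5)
Your proof is correct and follows essentially the same route as the paper, whose entire argument is the identity $KL(X_p|S_p) = H(X_p,S_p) - H(X_p)$ together with the (implicit) observation that the two block minimizations of \eqref{Alternating-direction-KL} reproduce \eqref{eq:proposal-cross-entropy} and \eqref{proposal-CRF-negative-entropie}; you simply spell out the block updates that the paper leaves implicit. One small difference worth noting: the paper attributes the qualifier \emph{approximate} primarily to the splitting being performed on the network outputs $S$ rather than on the actual optimization variables $\theta$, whereas you emphasize the absent dual ascent and the inexact mean-field solve --- all are legitimate sources of approximation, but the paper's stated one is the first.
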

\begin{proof}
The link between \eqref{Alternating-direction-KL} and \eqref{proposal-CRF-negative-entropie} comes directly from the following relation between the KL divergence and the entropies: $KL(X_p|S_p) = {H}(X_p, S_p) - H(X_p)$.
\end{proof}
Instead of optimizing directly regularized loss \eqref{eq:jointloss} with respect to network parameters, proposal methods splits 
the optimization problem into two easier sub-problems in \eqref{Alternating-direction-KL}. 
This is done by replacing the network softmax outputs $S_p$ in the regularization by latent distributions $X_p$ (the proposals) and 
minimizing a divergence between $S_p$ and $X_p$, which is KL in this case. This is conceptually similar to the general principles of 
ADM \cite{Boyd2011}, except that the splitting is not done directly with respect the variables of the problem (i.e., parameters $\theta$) 
but rather with respect to network outputs $S$.
This can be viewed as an {\em approximate} ADM scheme, which does not account directly for variables $\theta$ in the ADM splitting. 
Note that the method in \cite{kolesnikov2016seed} generates proposals via dense CRF layer, 
but their approach slightly deviates from the described ADM scheme since they also back-propagate 
through this layer\footnote{Cross-entropy loss $H(X(S),S)$ in \cite{kolesnikov2016seed} uses CRF layer proposal $X(S)$ generated 
from network output $S$. Dependence of $X$ on $S$ motivates back-propagation for this layer.}. 
But, as we show in Table \ref{tab:sectag}, 
such back-propagation does not help and can be dropped. Moreover, our direct optimization of regularized losses makes such proposal generating
layers (or procedures) entirely redundant. Our approach gives simpler and more efficient training avoiding expensive iterative inference
\cite{kolesnikov2016seed} and obtaining better performance.


\section{Experiments} \label{sec:experiments}
Sec. \ref{sec:comparelosses} is the main experimental result of this paper. For weakly-supervised segmentation with scribbles \cite{scribblesup}, we train with different regularized losses. The experiments cover our proposed CRF loss, high-order normalized cut loss in \cite{ncloss:cvpr18} and kernel cut loss, as discussed in Sec. \ref{sec:overview}. We show that combining CRF \eqref{eq:crfloss} with normalized cut \eqref{eq:ncloss} \textit{a la} KernelCut \cite{NC-MRF:eccv16} yields the best performance.

In Sec. \ref{sec:directvsproposal}, using direct loss and using generated proposals for training are compared. In the light of the technical connection of the two schemes from optimization perspective in Sec. \ref{sec:proposal-connection}, we also evaluate how ``regularized'' are the segmentations obtained by computing the regularization energy. Besides for scribbles, we also utilize our regularized loss framework for image-level labels based supervision and compare to SEC \cite{kolesnikov2016seed}, a recent method based on proposal generation. Our method achieved the state-of-the-art for weakly supervised segmentation with scribbles or image-level labels.

We also investigate if regularization loss will facilitate fully or semi-supervised segmentation with unlabeled images. Some preliminary results are given in Sec. \ref{sec:fullyandsemi} for these extensions.

\hspace{-.5cm}\textbf{Dataset} Most experiments are on the PASCAL VOC12 segmentation dataset. For all method, we train with the augmented dataset of 10,582 images. The scribble annotations for these training images are from \cite{scribblesup}. Following standard protocol, mean intersection-over-union (mIoU) over the 21 classes is evaluated on the \textit{val} set that contains 1,449 images. For image-level label supervision, our experiment setup and dataset is the same as that used in \cite{kolesnikov2016seed}.

\hspace{-.5cm}\textbf{Implementation details} Our implementation is based on DeepLab v2 \cite{deeplab}. We follow the learning rate strategy in DeepLab v2 \footnote{https://bitbucket.org/aquariusjay/deeplab-public-ver2} for the baseline with full supervision. For our method with regularized loss, we first train with partial cross entropy loss only for the seeds. Then we fine-tune with extra regularized losses of different types for the same number of iterations. 
Our CRF and normalized cut regularization losses are defined at full image resolution. If the network outputs shrinked labeling, which is typical, the labeling is interpolated to original resolution before feeding into the loss layer.

We choose dense Gaussian kernel over RGBXY channels for $R_{CRF}(S)$, $R_{NC}(S)$ and $R_{KC}(S)$. As hyper-parameter, the Gaussian bandwidth is optimized via validation for DenseCRF, normalized cut and kernel cut. As is also mentioned in \cite{ncloss:cvpr18}, naive forward and backward pass of such fully-connected pairwise or high-order loss layer would be prohibitively slow ($O(|\Omega|^2)$ for $|\Omega|$ pixels). For example, to implement $R_{CRF}(S)$ \eqref{eq:crfloss} as a loss, we need to compute its gradient w.r.t. $S^k$ during backpropagation,

\begin{equation} \label{eq:crfgradient}
\frac{\partial R_{CRF}(S)}{\partial S^k}=-2WS^k.
\end{equation}

For DenseCRF where $W$ is fully connected Gaussian, computing the gradient \eqref{eq:crfgradient} becomes a standard Bilateral filtering problem, for which many fast methods were proposed \cite{adams2010fast,paris2009fast}. We implement our loss layers using fast Gaussian filtering \cite{adams2010fast}, which is also utilized in the inference of DenseCRF \cite{koltun:NIPS11,zheng2015conditional}. Using the same fast filtering component, we can also computer the following gradient \eqref{eq:kcgradient} of our Kernel Cut loss \eqref{eq:kcloss} in linear time. Note that our CRF and KC loss layer is much faster than CRF inference layer \cite{kolesnikov2016seed,zheng2015conditional} since no iterations is needed.

\begin{equation} \label{eq:kcgradient}
\frac{\partial R_{KC}(S)}{\partial S^k}=-2WS^k+\gamma \frac{S^{k'}\hat{W}S^kd}{(d'S^k)^2}-\gamma\frac{2\hat{W}S^k}{d'S^k}.
\end{equation}

\subsection{Comparison of regularized losses}
\label{sec:comparelosses}

\begin{table}[t]
\centering
\begin{tabular}{l | c|c|c|c|c}
\hline
   &   \multicolumn{4}{c|}{Weak}  &\multirow{2}{*}{ Full}\\
           & CE only & w/ NC \cite{ncloss:cvpr18} & w/ CRF & w/ KernelCut &\\\hline
{DeepLab-MSc-largeFOV}                   & 56.0 (8.1)& 60.5 (3.6) & 63.1 (1.0)& \textbf{63.5 (0.6)}& 64.1\\
{DeepLab-MSc-largeFOV+CRF}             &  62.0 (6.7) &  65.1 (3.6) & 65.9 (2.8) & \textbf{66.7 (2.0)}&      68.7 \\
{DeepLab-VGG16}                   & 60.4 (8.4) & 62.4 (6.4) & 64.4 (4.4) & \textbf{64.8 (4.0)}& 68.8 \\
{DeepLab-VGG16+CRF}               &  64.3 (7.2) & 65.2 (6.3) & 66.4 (5.1) & \textbf{66.7 (4.8)}&     71.5\\
{DeepLab-ResNet101  }             & 69.5 (6.1)& 72.8 (2.8) & 72.9 (2.7)&\textbf{73.0 (2.6)}& 75.6\\
{DeepLab-ResNet101+CRF}              & 72.8 (4.0)& 74.5 (2.3) & \textbf{75.0 (1.8)}& \textbf{75.0 (1.8)} & 76.8 \\\hline
\end{tabular}
\caption{mIOU on PASCAL VOC2012 {\textit val} set. Our flexible framework allows various types of regularization losses for weakly supervised segmentation, e.g. normalized cut, CRF or their combinations (KernelCut \cite{NC-MRF:eccv16}) as joint loss. We achieved  the state-of-the-art with scribbles. In () shows the offset to the result with full masks.}
\label{tab:mainresult}
\end{table}

Tab. \ref{tab:mainresult} summaries the results with different regularized losses. Here we report both result with or without CRF post-processing on various networks. The baselines are with cross entropy losses of full labeled masks or partial seeds ignoring unlabeled region. We choose the weight of the regularization term to achieve the best validation accuracy. The state-of-the-art of scribble-based segmentation is from prior work \cite{ncloss:cvpr18} with extra normalized cut loss. Consistently over different networks, using the proposed CRF loss outperforms that with normalized cut loss. Our best result is obtained when combining both normalized cut loss and DenseCRF loss. Clearly, utilization of CRF loss and KernelCut loss reduce the gap toward the full supervision baseline. With DeepLab-MSc-largeFOV followed by CRF post processing, using KernelCut regularized loss achieved mIOU of 66.7\%, while previous best is 65.1\% with normalized cut loss \cite{ncloss:cvpr18}. Our result with scribbles approaches \textbf{97.6\%} of the quality of that with full supervision, yet only ~\textbf{3\%} of all pixels are scribbled. This paper pushes the limit of weakly supervised segmentation.

\begin{figure}[h]
    \centering
    \begin{subfigure}{0.125\textwidth}
    \centering
        \includegraphics[width=1\columnwidth]{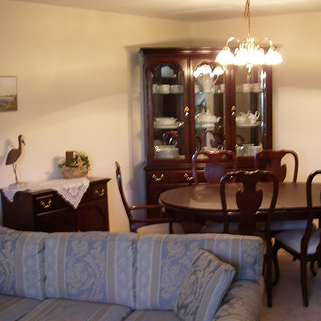}\\
        \includegraphics[width=1\columnwidth]{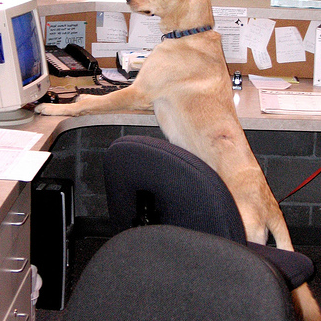} \\
        \includegraphics[width=1\columnwidth]{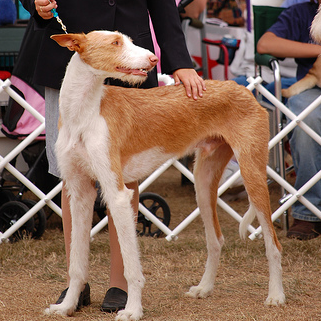} \\
        \includegraphics[width=1\columnwidth]{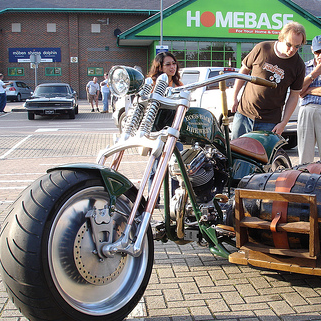}
        \captionsetup{labelformat=empty}
        \caption{image}
    \end{subfigure}
            \begin{subfigure}{0.17\textwidth}
    \centering
        \includegraphics[width=1\columnwidth,trim={2cm 1cm 0.5cm 1cm},clip]{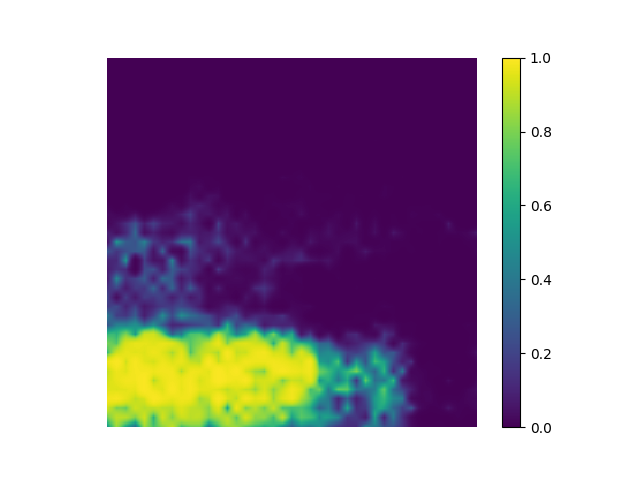}\\
        \includegraphics[width=1\columnwidth,trim={2cm 1cm 0.5cm 1cm},clip]{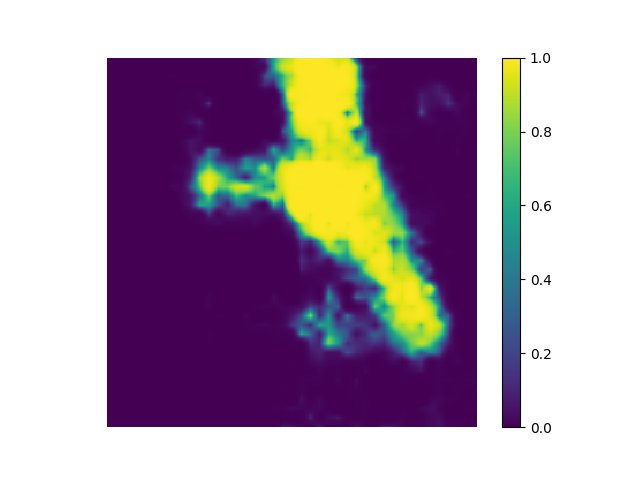} \\
        \includegraphics[width=1\columnwidth,trim={2cm 1cm 0.5cm 1cm},clip]{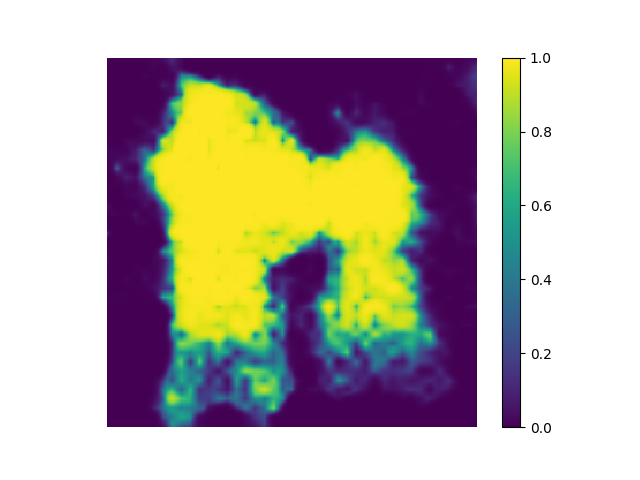} \\
        \includegraphics[width=1\columnwidth,trim={2cm 1cm 0.5cm 1cm},clip]{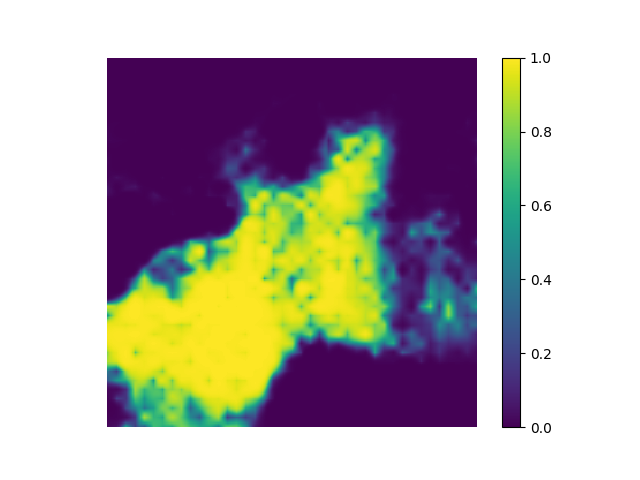}
        \captionsetup{labelformat=empty}
        \caption{{\scriptsize network output}}
    \end{subfigure}
        \begin{subfigure}{0.17\textwidth}
    \centering
        \includegraphics[width=1\columnwidth,trim={2cm 1cm 0.5cm 1cm},clip]{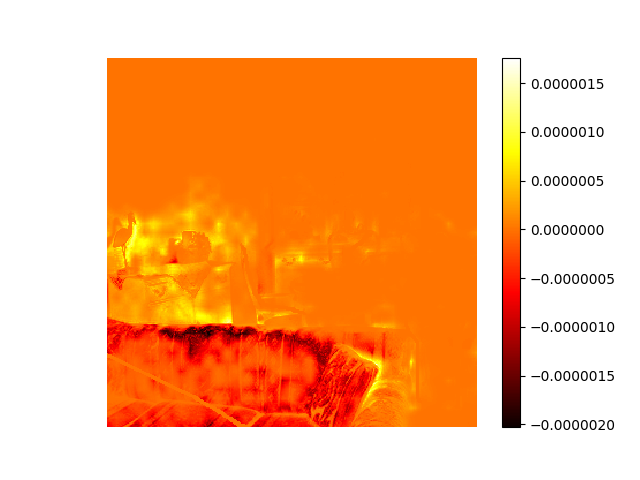}\\
        \includegraphics[width=1\columnwidth,trim={2cm 1cm 0.5cm 1cm},clip]{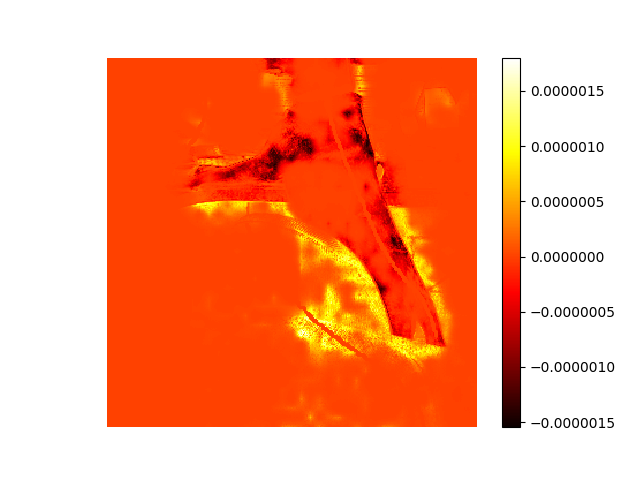} \\
        \includegraphics[width=1\columnwidth,trim={2cm 1cm 0.5cm 1cm},clip]{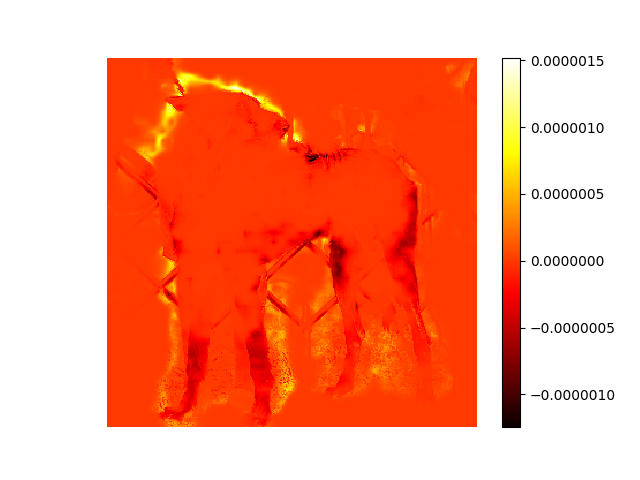} \\
        \includegraphics[width=1\columnwidth,trim={2cm 1cm 0.5cm 1cm},clip]{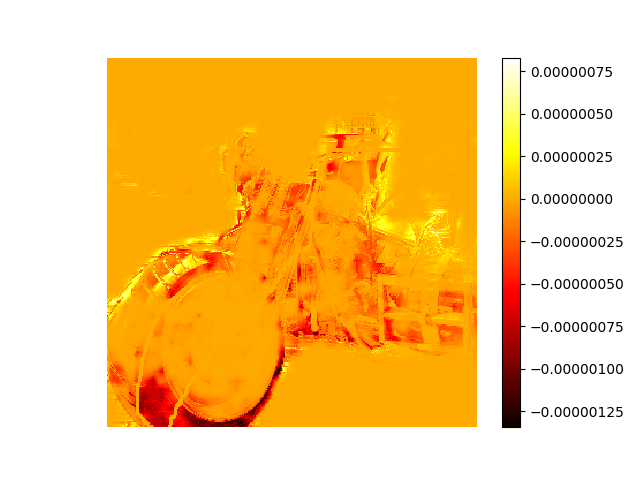}
        \captionsetup{labelformat=empty}
        \caption{NC grad.}
    \end{subfigure}
        \begin{subfigure}{0.17\textwidth}
    \centering
        \includegraphics[width=1\columnwidth,trim={2cm 1cm 0.5cm 1cm},clip]{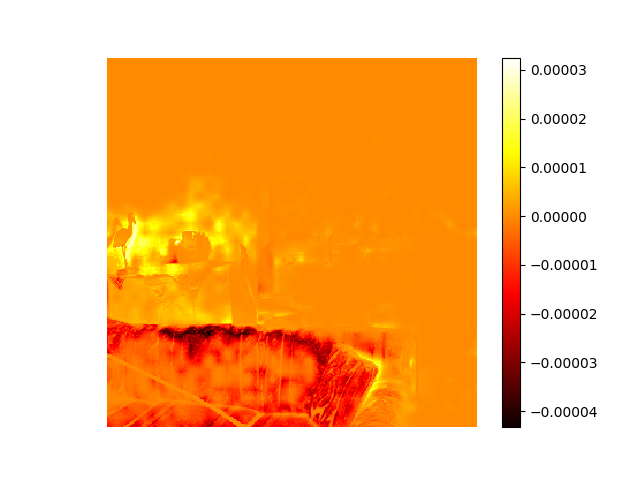}\\
        \includegraphics[width=1\columnwidth,trim={2cm 1cm 0.5cm 1cm},clip]{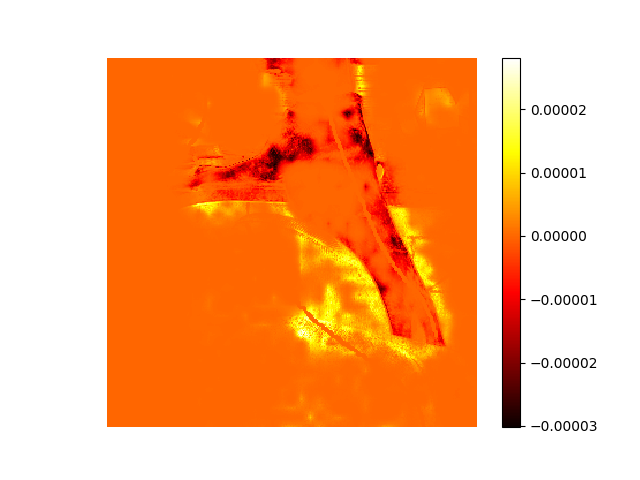} \\
        \includegraphics[width=1\columnwidth,trim={2cm 1cm 0.5cm 1cm},clip]{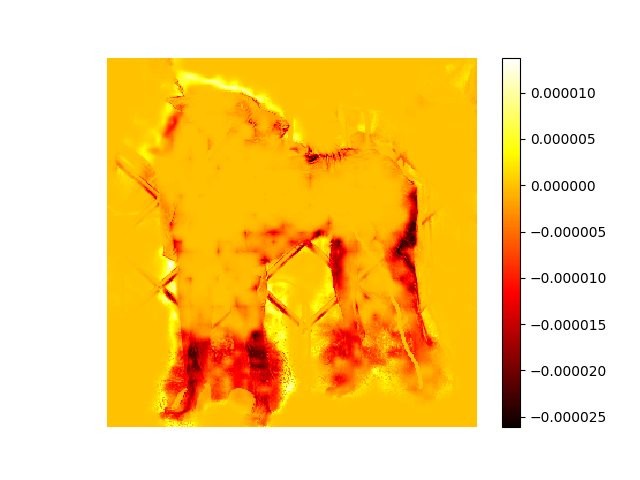} \\
        \includegraphics[width=1\columnwidth,trim={2cm 1cm 0.5cm 1cm},clip]{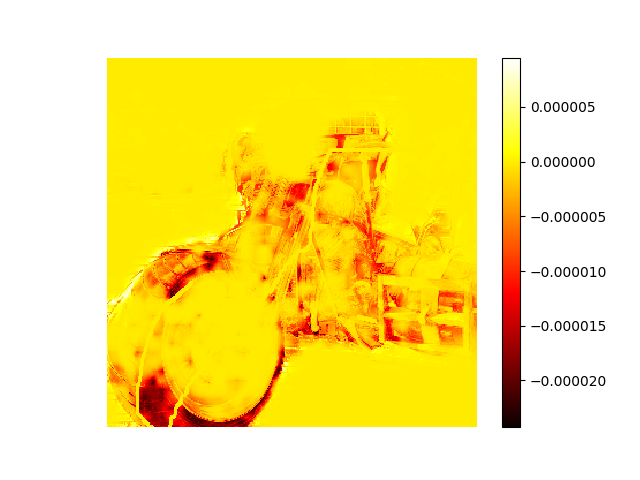}
        \captionsetup{labelformat=empty}
        \caption{CRF grad.}
    \end{subfigure}
     \begin{subfigure}{0.17\textwidth}
    \centering
        \includegraphics[width=1\columnwidth,trim={2cm 1cm 0.5cm 1cm},clip]{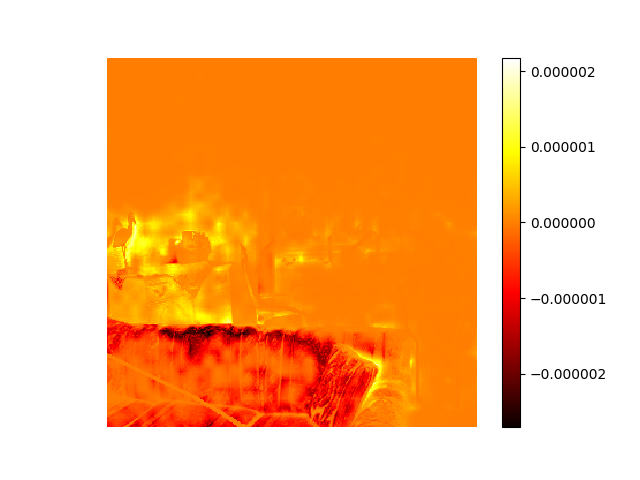}\\
        \includegraphics[width=1\columnwidth,trim={2cm 1cm 0.5cm 1cm},clip]{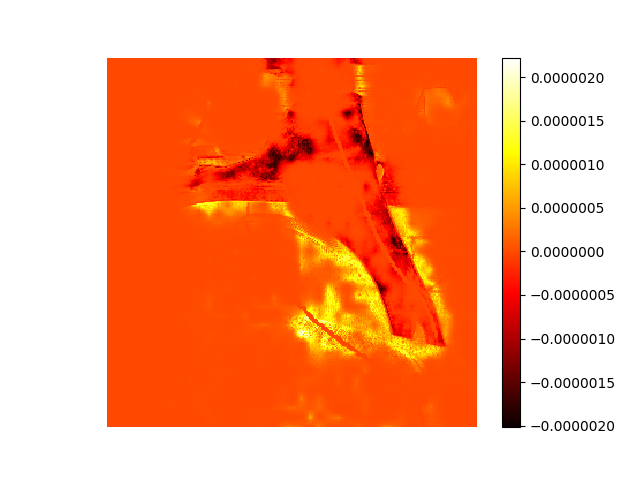} \\
        \includegraphics[width=1\columnwidth,trim={2cm 1cm 0.5cm 1cm},clip]{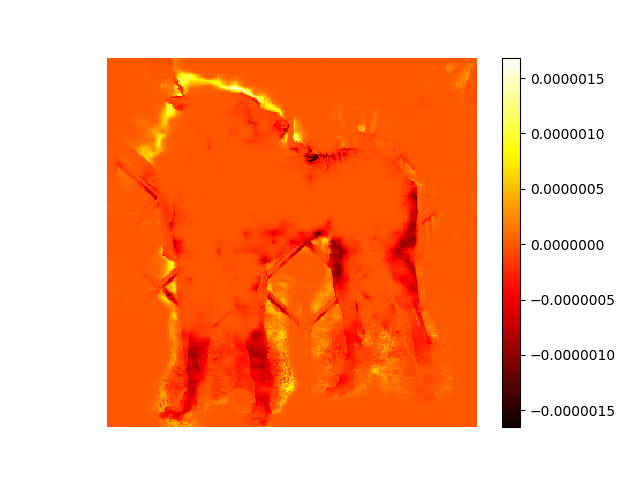} \\
        \includegraphics[width=1\columnwidth,trim={2cm 1cm 0.5cm 1cm},clip]{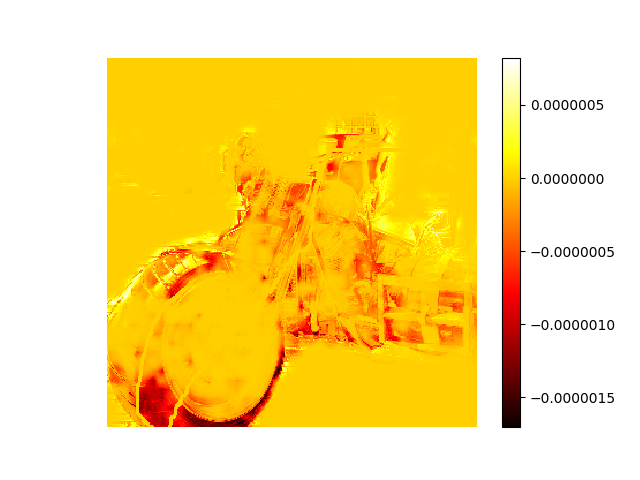}
        \captionsetup{labelformat=empty}
        \caption{KC grad.}
    \end{subfigure}

    \caption{Visualization of the gradient for different losses. The negative (positive) gradients are coded in red (yellow). For example, negative gradients on the sofa drives the network to predict ``sofa'' for these pixels. Also note how the dog pops out in the gradient map.}
   \label{fig:gradient}
\end{figure}

To get some intuition about these losses and their regularization effect, we visualize their gradient w.r.t. segmentation $\frac{\partial R(S)}{\partial S}$ in Fig. \ref{fig:gradient}. Note that the \textit{sign} of gradients indicates whether to encourage or discourage certain labeling. The color coded gradients clearly show evidence toward better color clustering /edge alignment/ object separation with regularized loss. The gradients of different losses are slightly different. Since kernel cut is the combination of normalized cut with CRF, then its gradient is the sum of that of each.

Fig. \ref{fig:scribblesexamples} shows some qualitative examples with different losses. Results with regularized loss is better than that without. Besides, the segmentation with kernel cut loss have better edge alignment compared to that with normalized cut loss. This is because of the extra pairwise CRF loss. The effect of CRF loss and normalized cut loss is different. Our Kernel Cut loss combines the benefit of both regional color clustering (normalized cut) and pairwise regularization (DenseCRF). By combining both we can achieve better segmentation regularization.

\begin{figure}[h!]
    \centering
    \begin{subfigure}{0.16\textwidth}
    \centering
        \includegraphics[width=1\columnwidth]{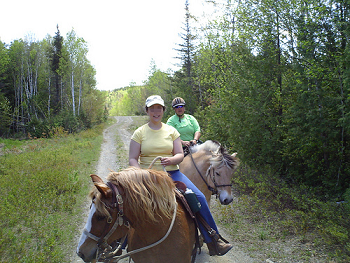}\\
        \includegraphics[width=1\columnwidth]{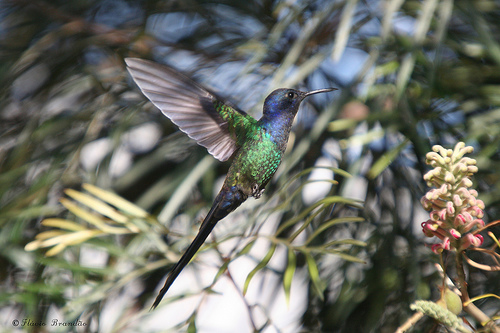} \\
         \includegraphics[width=1\columnwidth]{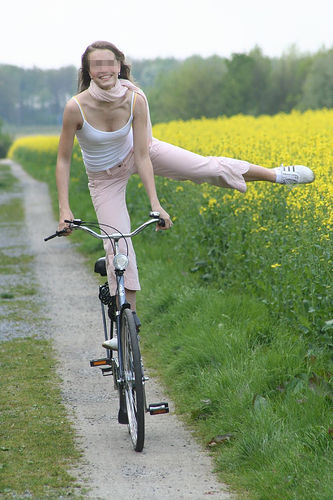} \\
        \includegraphics[width=1\columnwidth]{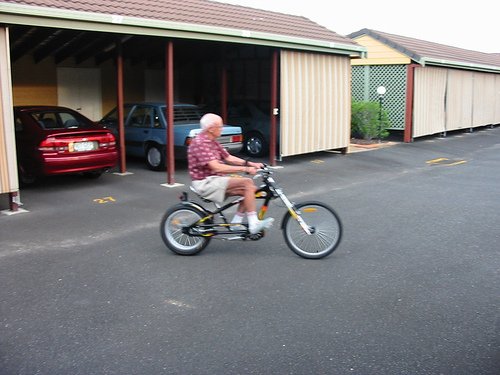} \\
        \includegraphics[width=1\columnwidth]{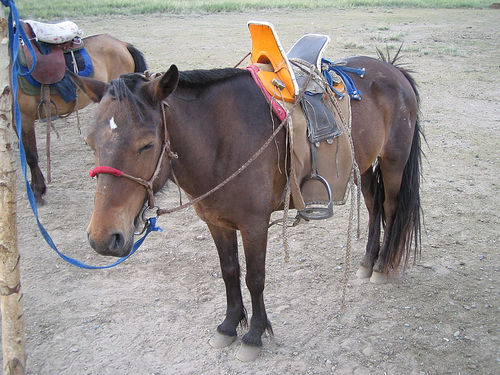} \\
        \includegraphics[width=1\columnwidth]{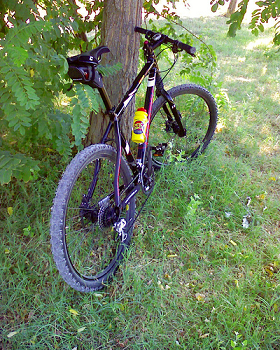} \\
        \includegraphics[width=1\columnwidth]{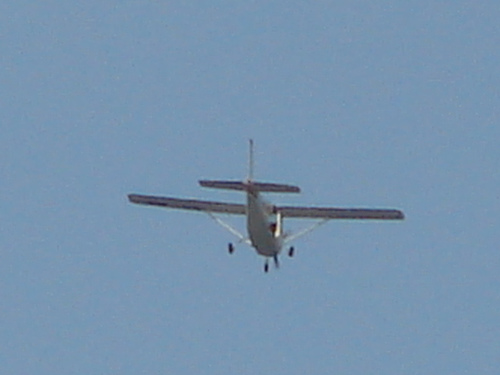} \\
        \includegraphics[width=1\columnwidth]{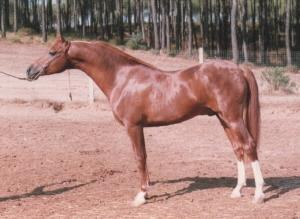}\\
        \includegraphics[width=1\columnwidth]{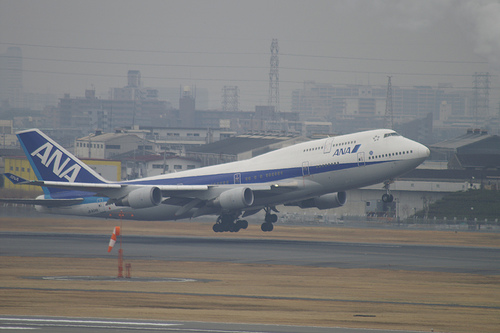}
        \captionsetup{labelformat=empty}
        \caption{image}
    \end{subfigure}
            \begin{subfigure}{0.16\textwidth}
    \centering
        \includegraphics[width=1\columnwidth]{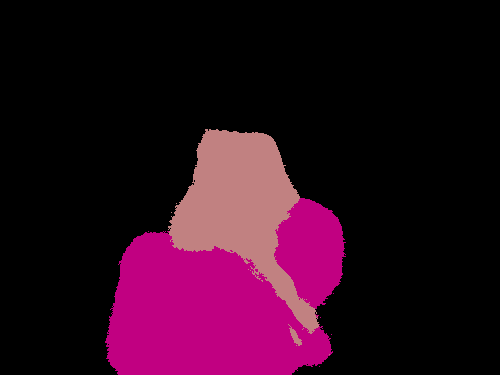}\\
        \includegraphics[width=1\columnwidth]{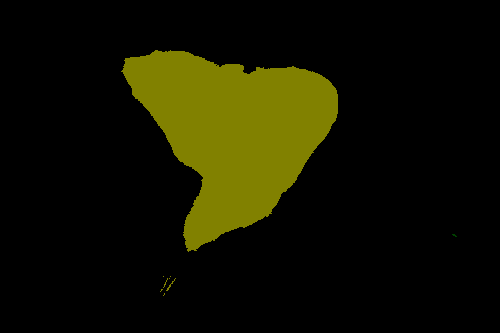} \\
         \includegraphics[width=1\columnwidth]{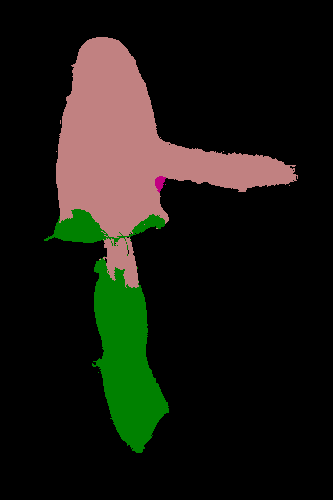} \\
        \includegraphics[width=1\columnwidth]{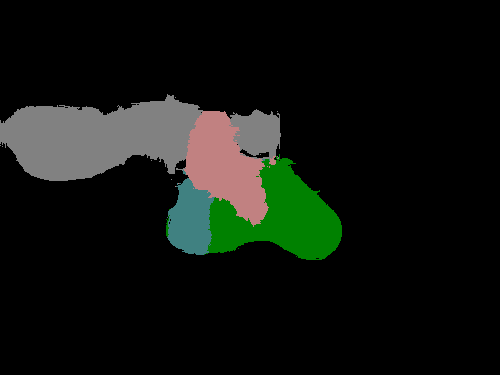} \\
        \includegraphics[width=1\columnwidth]{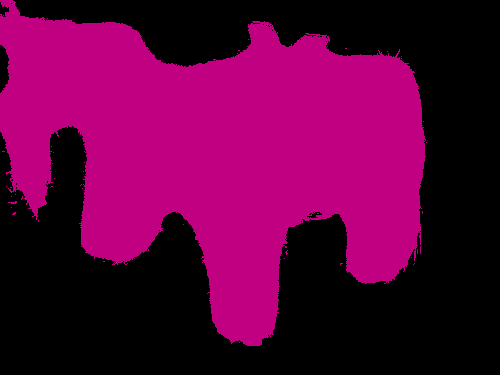} \\
        \includegraphics[width=1\columnwidth]{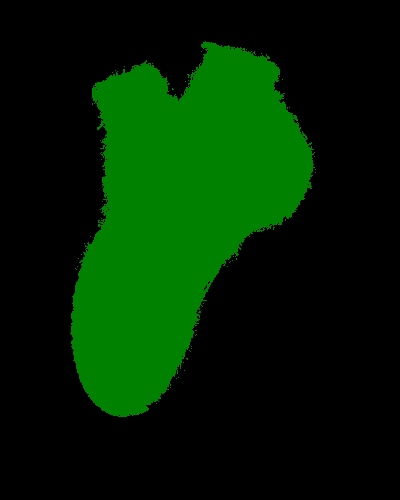} \\
        \includegraphics[width=1\columnwidth]{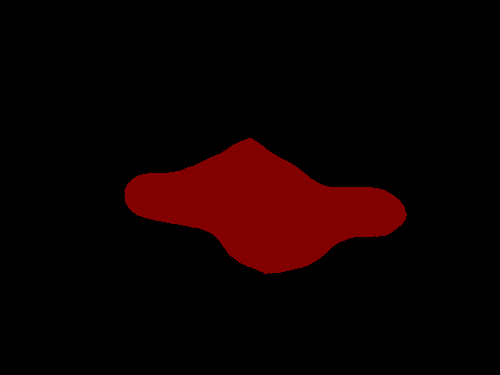} \\
        \includegraphics[width=1\columnwidth]{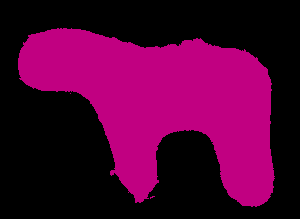}\\
        \includegraphics[width=1\columnwidth]{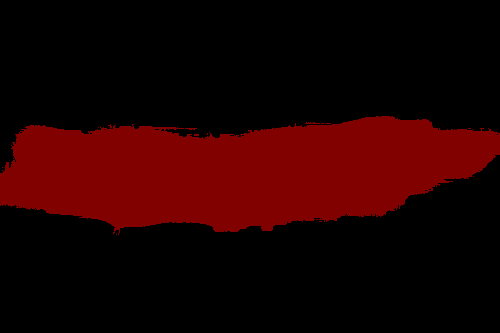}
        \captionsetup{labelformat=empty}
        \caption{CE loss only}
    \end{subfigure}
        \begin{subfigure}{0.16\textwidth}
    \centering
        \includegraphics[width=1\columnwidth]{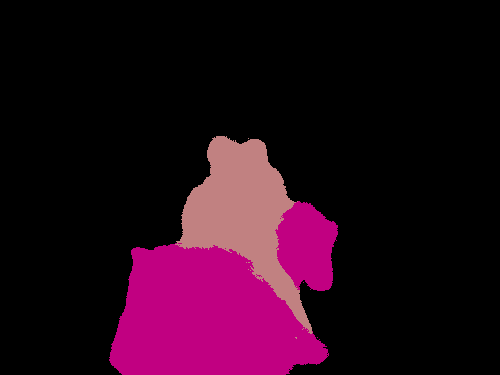}\\
        \includegraphics[width=1\columnwidth]{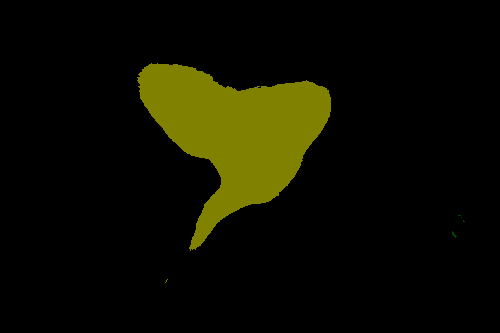} \\
         \includegraphics[width=1\columnwidth]{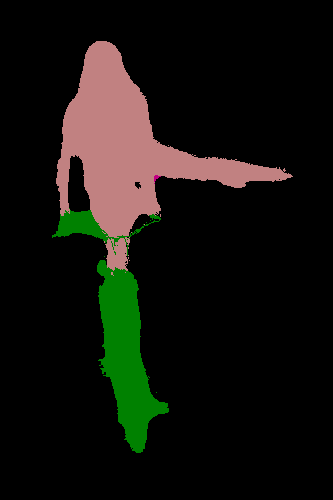} \\
        \includegraphics[width=1\columnwidth]{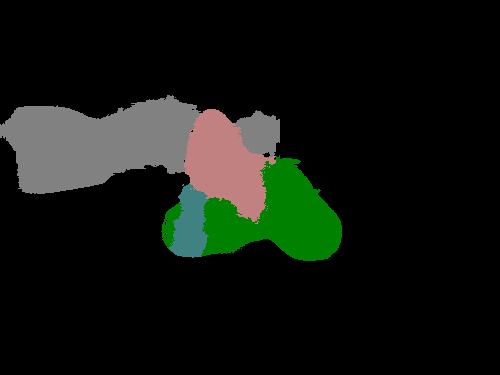} \\
        \includegraphics[width=1\columnwidth]{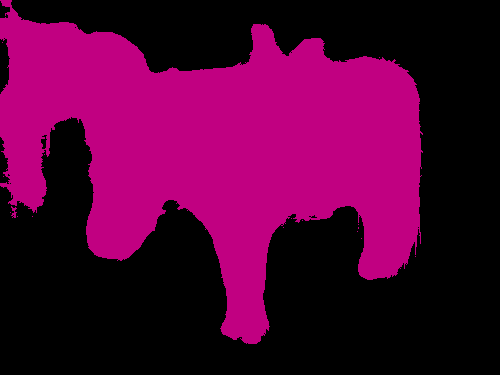} \\
        \includegraphics[width=1\columnwidth]{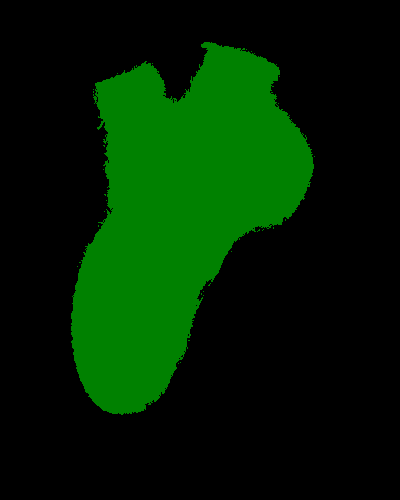} \\
        \includegraphics[width=1\columnwidth]{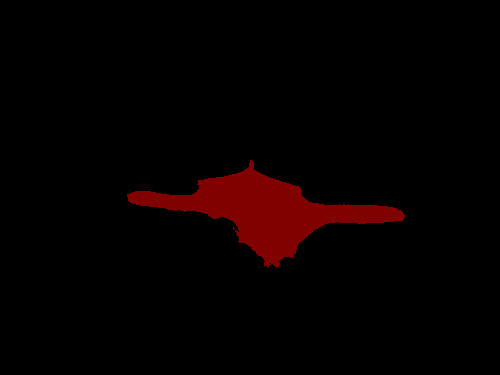} \\
        \includegraphics[width=1\columnwidth]{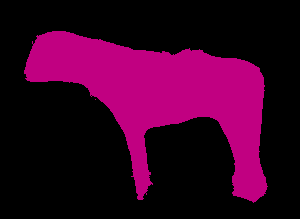}\\
        \includegraphics[width=1\columnwidth]{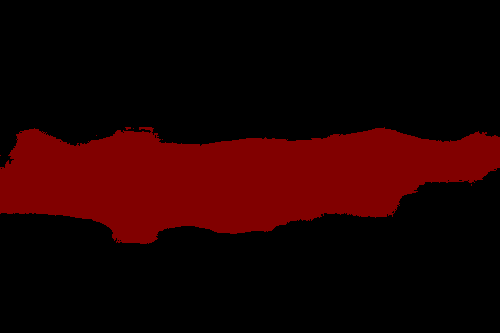}
        \captionsetup{labelformat=empty}
        \caption{w/ NC loss \cite{ncloss:cvpr18}}
    \end{subfigure}
     \begin{subfigure}{0.16\textwidth}
    \centering
        \includegraphics[width=1\columnwidth]{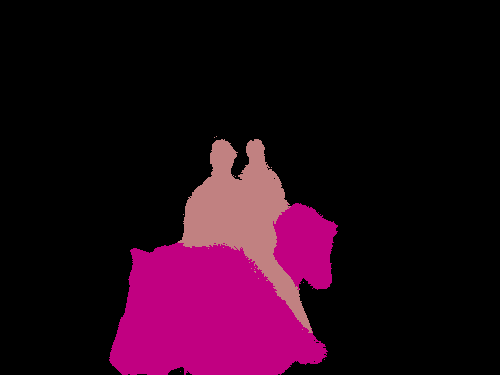}\\
        \includegraphics[width=1\columnwidth]{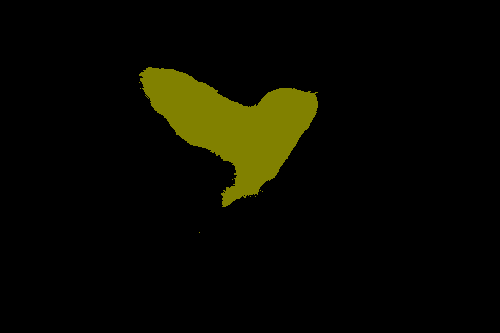} \\
         \includegraphics[width=1\columnwidth]{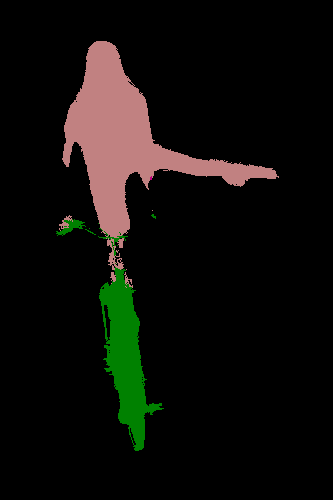} \\
        \includegraphics[width=1\columnwidth]{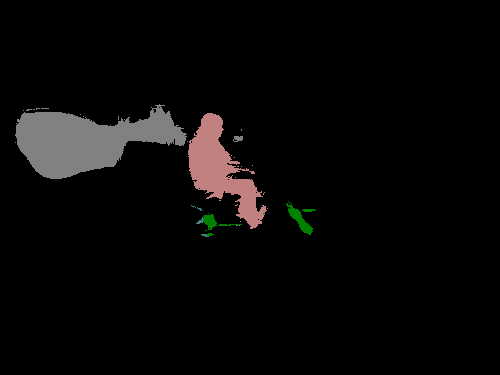} \\
        \includegraphics[width=1\columnwidth]{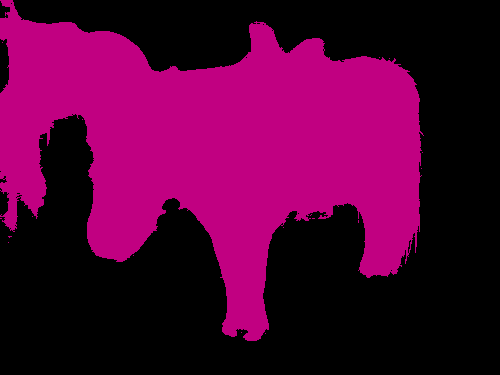} \\
        \includegraphics[width=1\columnwidth]{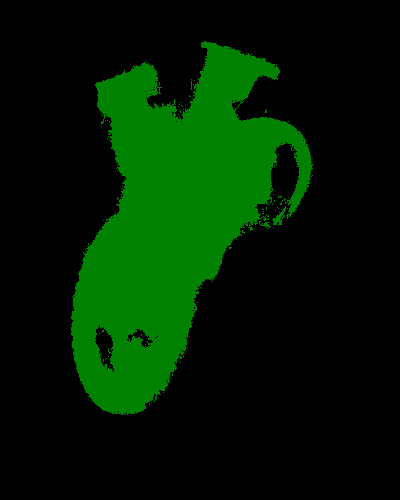} \\
        \includegraphics[width=1\columnwidth]{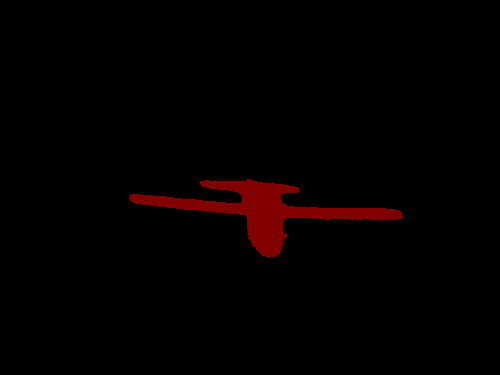} \\
        \includegraphics[width=1\columnwidth]{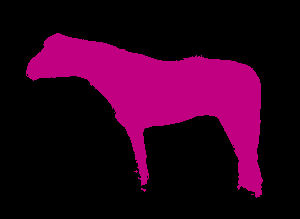}\\
        \includegraphics[width=1\columnwidth]{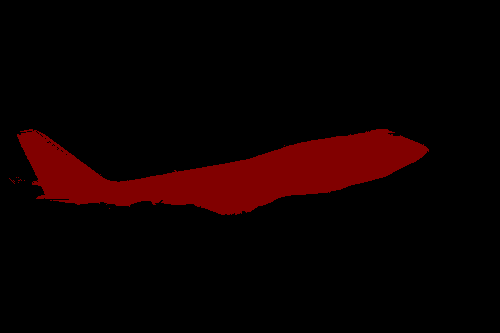}
        \captionsetup{labelformat=empty}
        \caption{w/ CRF loss}
    \end{subfigure}
        \begin{subfigure}{0.16\textwidth}
    \centering
        \includegraphics[width=1\columnwidth]{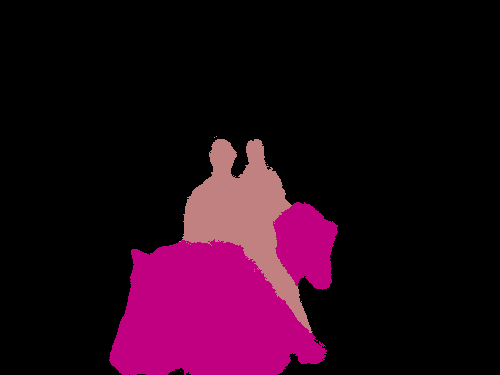}\\
        \includegraphics[width=1\columnwidth]{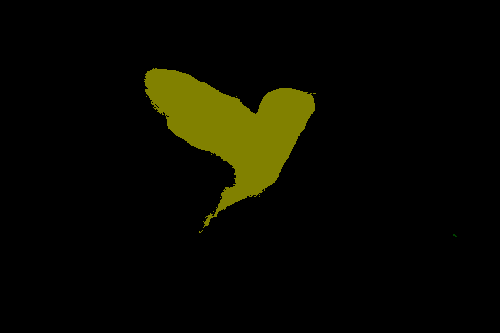} \\
         \includegraphics[width=1\columnwidth]{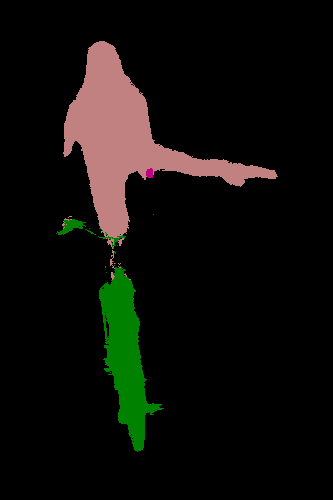} \\
        \includegraphics[width=1\columnwidth]{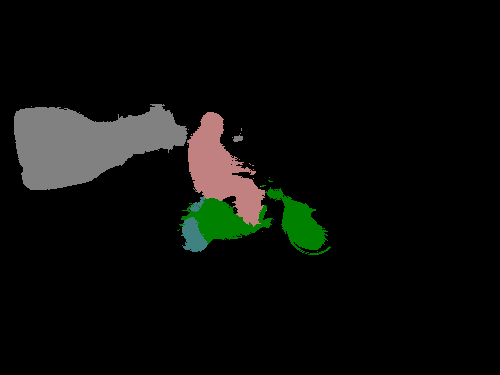} \\
        \includegraphics[width=1\columnwidth]{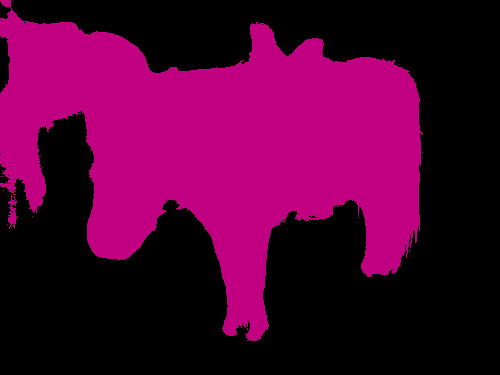} \\
        \includegraphics[width=1\columnwidth]{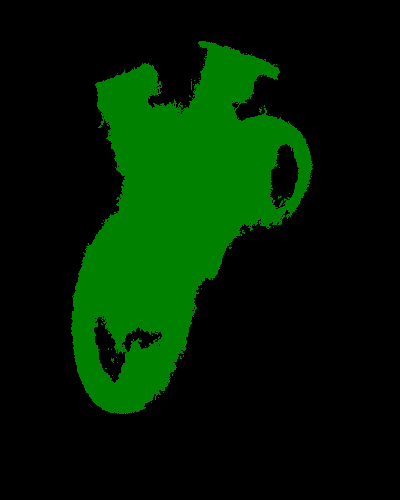} \\
        \includegraphics[width=1\columnwidth]{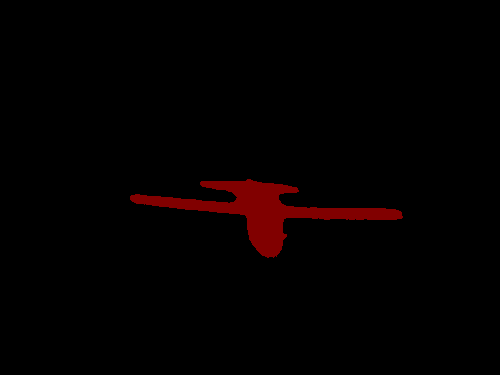} \\
        \includegraphics[width=1\columnwidth]{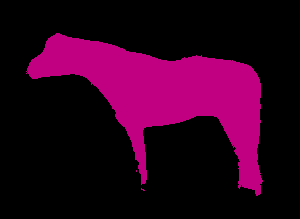}\\
        \includegraphics[width=1\columnwidth]{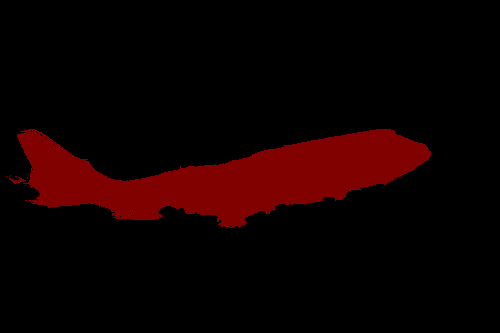}
        \captionsetup{labelformat=empty}
        \caption{{\scriptsize KernelCut loss}}
    \end{subfigure}
        \begin{subfigure}{0.16\textwidth}
    \centering
        \includegraphics[width=1\columnwidth]{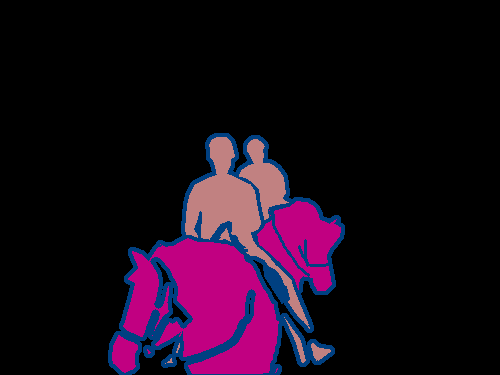}\\
        \includegraphics[width=1\columnwidth]{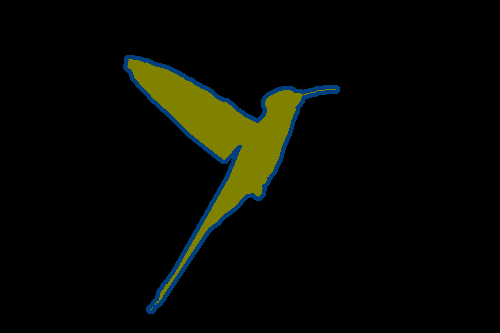} \\
         \includegraphics[width=1\columnwidth]{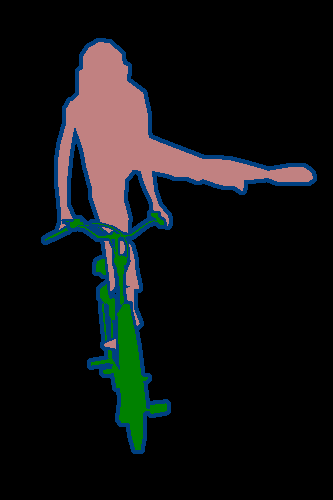} \\
        \includegraphics[width=1\columnwidth]{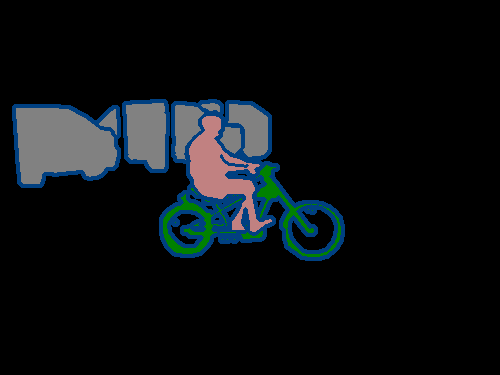} \\
        \includegraphics[width=1\columnwidth]{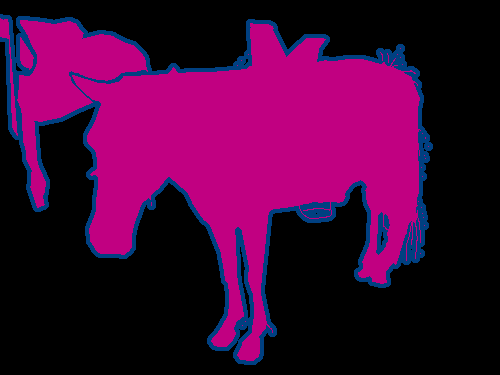} \\
        \includegraphics[width=1\columnwidth]{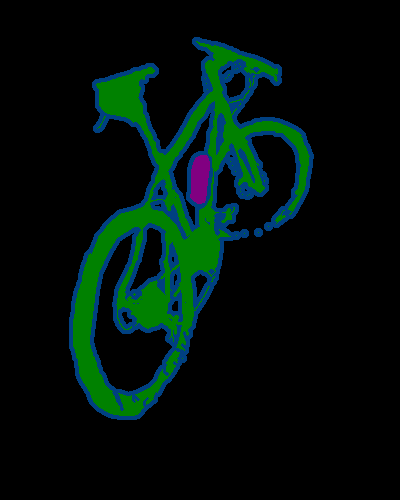} \\
        \includegraphics[width=1\columnwidth]{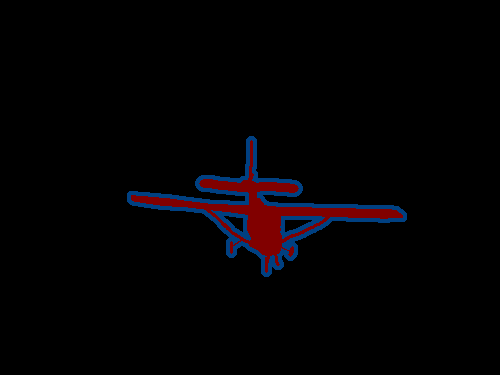} \\
        \includegraphics[width=1\columnwidth]{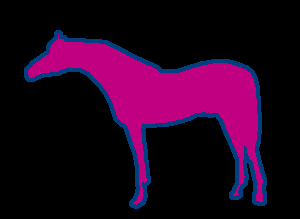}\\
        \includegraphics[width=1\columnwidth]{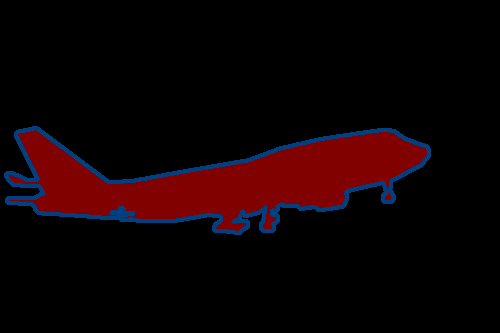}
        \captionsetup{labelformat=empty}
        \caption{ground truth}
    \end{subfigure}

   \caption{Examples on PASCAL VOC \textit{val} set. Kernel cut as regularization loss gives qualitatively better result than that with normalized cut loss. We found kernel cut results to have better edge alignment.}
   \label{fig:scribblesexamples}
\end{figure}

\subsection{Direct loss vs proposal generation}
\label{sec:directvsproposal}

Here we compare our direct loss and proposal generation methods (Sec. \ref{sec:proposal-connection}) in weakly
supervised setting mainly focusing on scribbles. Proposals can be generated \textit{offline} or \textit{online}. 
One straightforward proposal method is to treat GrabCut output as ``fake'' ground truth for training. 
ScribbleSup \cite{scribblesup} refines GrabCut output using network predicted segmentation as unary potentials. 
The proposals are updated but are generated offline. By online proposal generation, we let network output 
go through a CRF inference layer during training at each iteration. 
The loss for proposal generation is the cross entropy between the input and output of the CRF inference layer, 
see Sec.\ref{sec:proposal-connection}. A recent work that generates proposals online for tag-based
weakly-supervised segmentation is SEC \cite{kolesnikov2016seed}.

\begin{table}[h!]
\centering
\begin{tabular}{l | c| c | c| c |c}
\hline
    &  \multicolumn{4}{c|}{Weak} & \multirow{4}{*}{ Full}\\\cline{2-5}
    &  \multicolumn{3}{c|}{proposal generation} & direct loss\\\cline{2-5}
           &GrabCut & ScribbleSup &  SEC$^*$ &  \multirow{2}{*}{ CRF loss} &\\
                      &(one time) & (iterative) &  (online) &  &\\\hline
{DeepLab-MSc-largeFOV}        & 55.5 & n/a & {61.3} &\textbf{63.1} & 64.1\\
{DeepLab-MSc-largeFOV+CRF}        & 59.7 & 63.1 & {65.4} &\textbf{65.9} & 68.7\\
{DeepLab-VGG16}        & 59.0 & n/a  & {63.4} & \textbf{64.4}& 68.8\\
{DeepLab-ResNet101  }  & 63.9 & n/a  &{72.5} &\textbf{72.9}& 75.6\\\hline
\end{tabular}
\caption{Results using weak supervision (scribbles). The baseline is training with interactive GrabCut output. 
ScribbleSup  \cite{scribblesup}  alternates between GrabCut and CNN training, but the proposals are generated offline. It helps to have frequent online proposal updates at each iteration of training as in SEC$^*$, our adaptation of tag-based SEC \cite{kolesnikov2016seed} to weak supervision with
scribbles in \cite{scribblesup}. The best (quality and speed) training is based on simple direct loss
optimization avoiding proposal generations. This comparison uses the same dense CRF Gaussian bandwidths.}
\label{tab:scribblelampert}
\end{table}

 Table \ref{tab:scribblelampert} compares our direct loss method to proposal generation variants above.
We used the public implementation of SEC's {\em constrain-to-boundary loss}\footnote{https://github.com/kolesman/SEC} 
that combines explicit dense CRF proposal layer and cross entropy loss between the proposal and network output. 
We report the results for SEC$^*$, our adaptation of tag-based SEC to weak-supervision with scribbles 
from \cite{scribblesup}. 
We find that (frequent) online proposal updates give better results than 
those with fixed proposals. Compared to our direct loss method, (online) proposal generation gives inferior 
segmentation accuracy over different networks, see Table \ref{tab:scribblelampert}.

We further evaluate online proposal generation. Figure \ref{fig:directlossvstrustregion} compares it to our regularized loss method in terms of segmentation accuracy and obtained loss values. Even though the proposal generation scheme indirectly minimizes our regularized loss, such training scheme gives higher loss values than those 
obtained with our direct loss minimization. Also, direct loss minimization gives higher mIOUs for the training and validation.

\begin{figure}[b]
    \centering
        \includegraphics[width=0.24\textwidth]{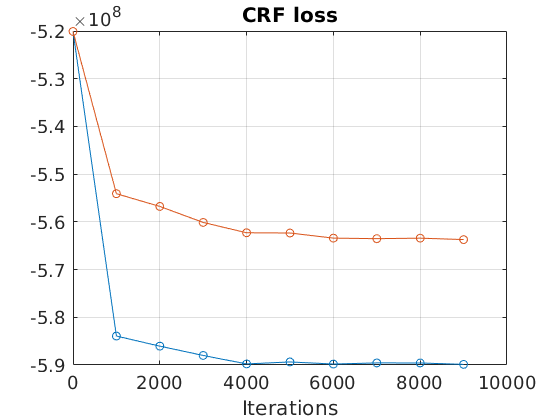}
        \includegraphics[width=0.24\textwidth]{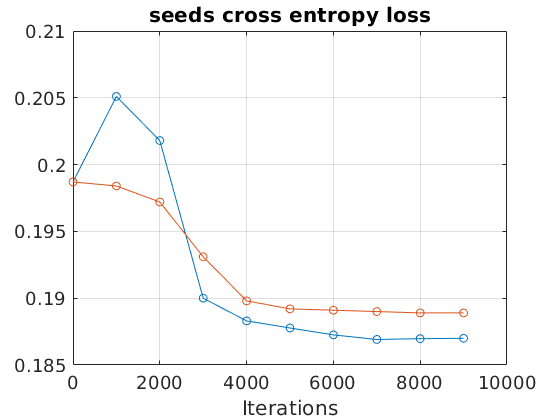}
         \includegraphics[width=0.24\textwidth]{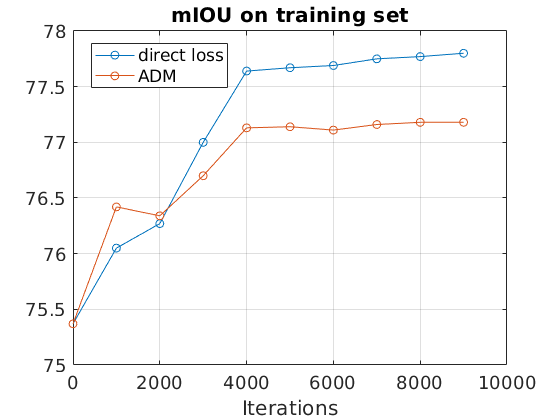}
        \includegraphics[width=0.24\textwidth]{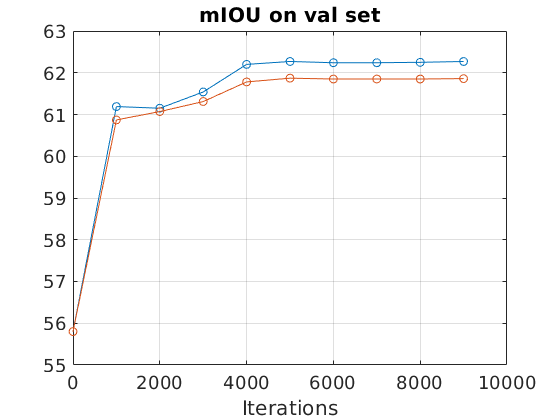}
   \caption{Our direct loss scheme achieves better mIOU accuracy on training and val set. The CRF loss and seeding loss of our trained model are also less than that with proposal generation scheme. For fair comparison, our CRF loss and the CRF inference layer in proposal generation method have the same Gaussian kernel in this experiment.}
   \label{fig:directlossvstrustregion}
\end{figure}

\begin{figure}[th!]
    \centering
    \begin{subfigure}{0.15\textwidth}
    \centering
        \includegraphics[width=1\columnwidth]{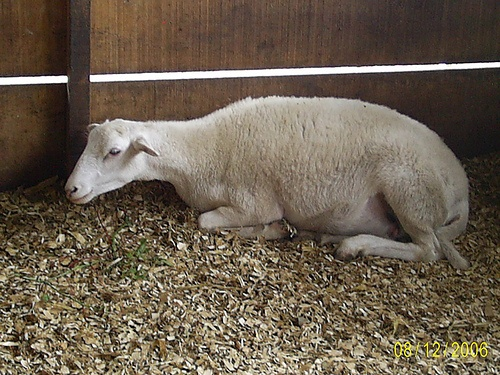}\\
        \includegraphics[width=1\columnwidth]{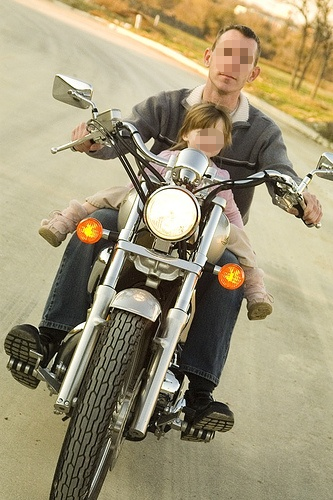} \\
        \includegraphics[width=1\columnwidth]{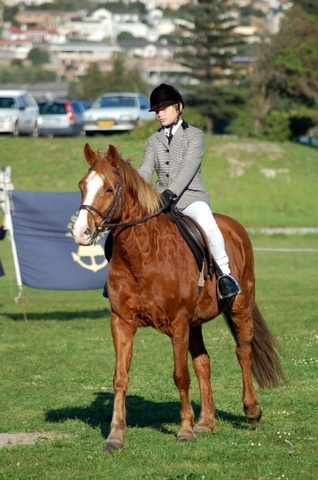} \\
        \includegraphics[width=1\columnwidth]{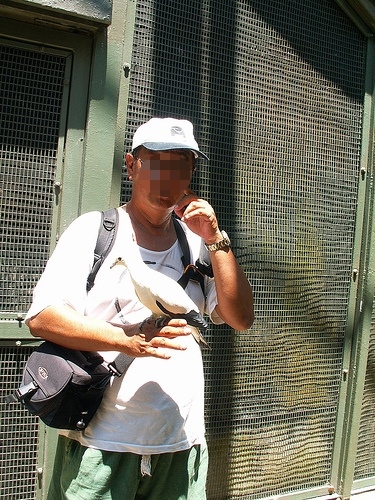} \\
        \includegraphics[width=1\columnwidth]{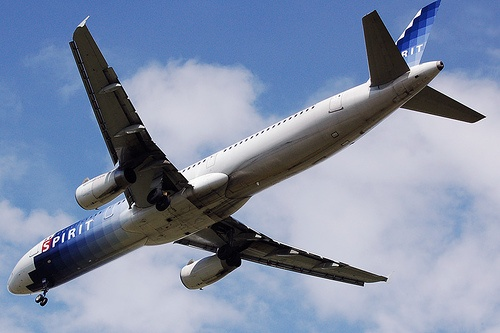}
        \captionsetup{labelformat=empty}
        \caption{image}
    \end{subfigure}
            \begin{subfigure}{0.15\textwidth}
    \centering
        \includegraphics[width=1\columnwidth]{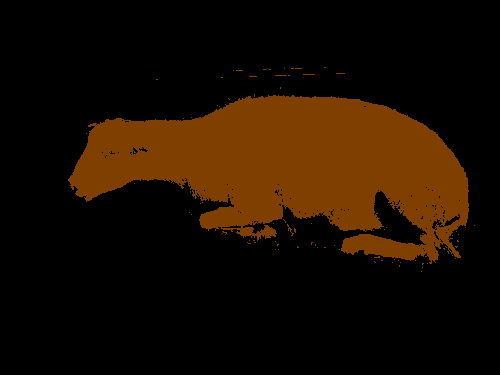}\\
        \includegraphics[width=1\columnwidth]{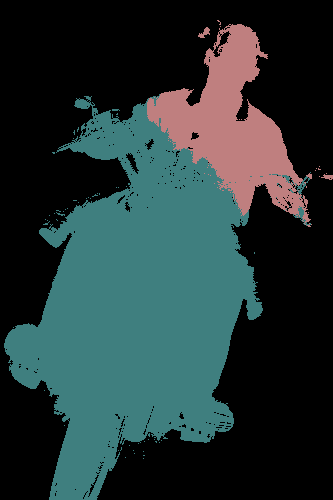} \\
        \includegraphics[width=1\columnwidth]{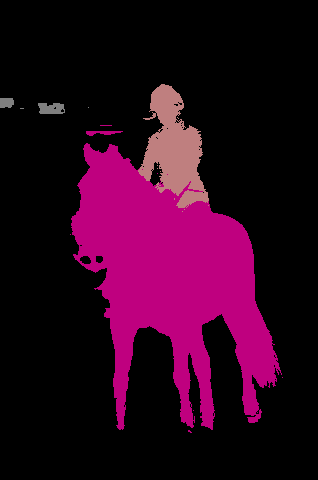} \\
        \includegraphics[width=1\columnwidth]{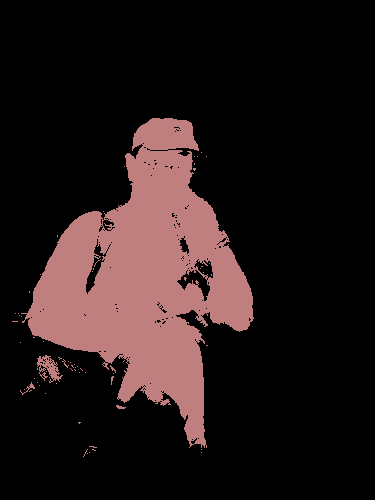} \\
        \includegraphics[width=1\columnwidth]{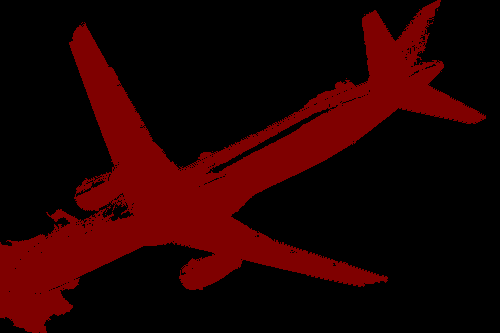}
        \captionsetup{labelformat=empty}
        \caption{SEC \cite{kolesnikov2016seed}}
    \end{subfigure}
        \begin{subfigure}{0.15\textwidth}
    \centering
        \includegraphics[width=1\columnwidth]{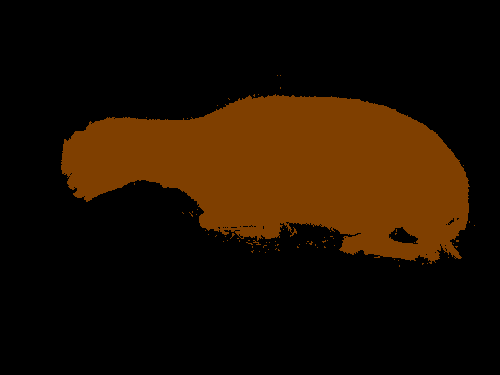}\\
        \includegraphics[width=1\columnwidth]{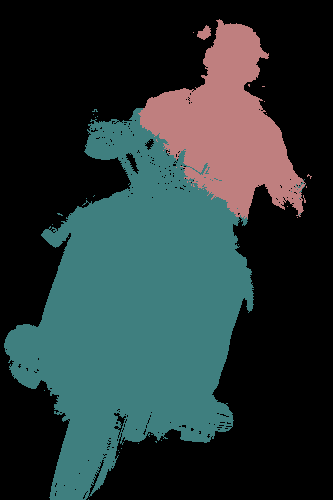} \\
        \includegraphics[width=1\columnwidth]{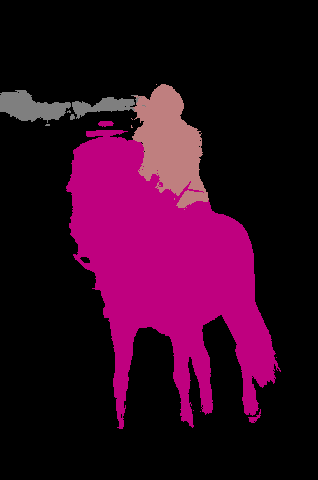} \\
        \includegraphics[width=1\columnwidth]{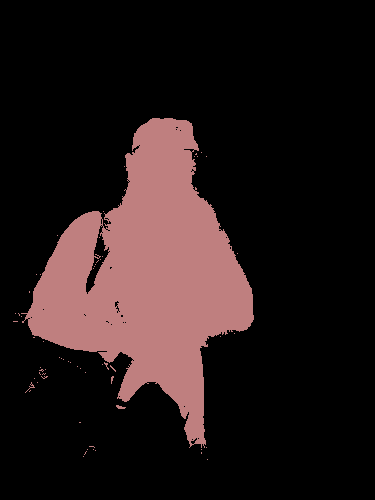} \\
        \includegraphics[width=1\columnwidth]{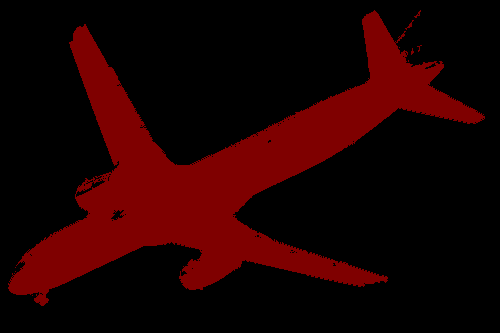}
        \captionsetup{labelformat=empty}
        \caption{w/ CRF loss}
    \end{subfigure}
        \begin{subfigure}{0.15\textwidth}
    \centering
        \includegraphics[width=1\columnwidth]{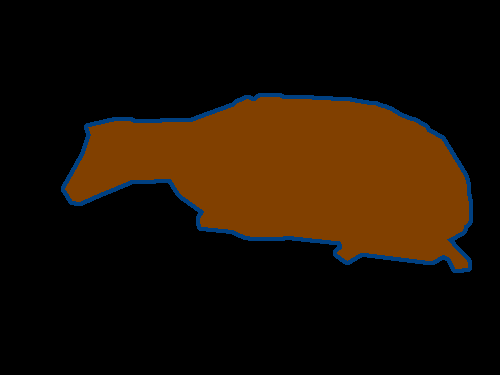}\\
        \includegraphics[width=1\columnwidth]{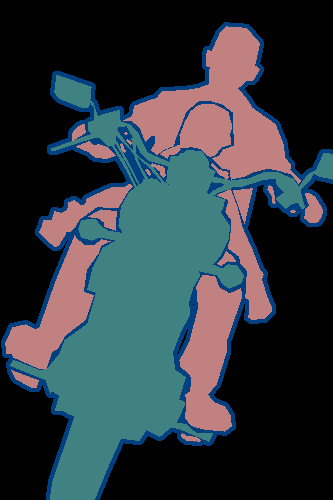} \\
        \includegraphics[width=1\columnwidth]{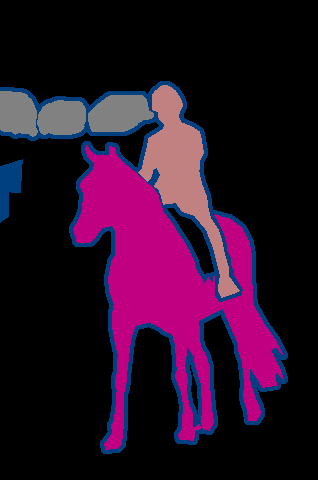} \\
        \includegraphics[width=1\columnwidth]{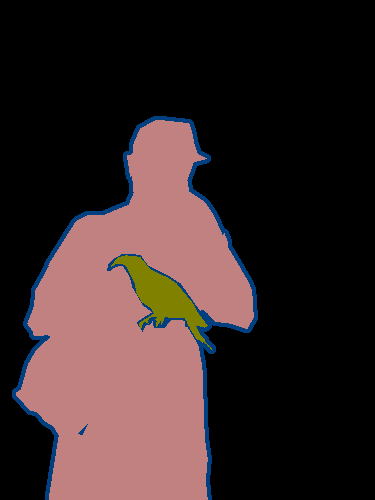} \\
        \includegraphics[width=1\columnwidth]{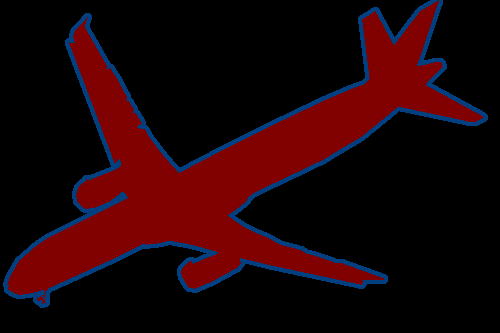}
        \captionsetup{labelformat=empty}
        \caption{ground truth}
    \end{subfigure}

   \caption{Examples on PASCAL VOC \textit{val} set for supervision with image-level labels (tags). We train using the seeding loss, expansion loss in SEC \cite{kolesnikov2016seed} and our CRF loss. Similar segmentation is obtained yet we avoid any iterative CRF inference and have the direct loss instead.}
   \label{fig:tagexamples}
\end{figure}

As mentioned earlier, SEC \cite{kolesnikov2016seed} was originally focused on tag-based supervision
and Table \ref{tab:sectag} reports some tests for that form of weak supervision. 
We compare SEC with its simplification replacing
their constrain-to-boundary loss by our direct regularization loss. 
We train using different combinations of losses for supervision based on image-level labels/tags. 
Our CRF loss helps to improve training to 43.9\% compared to 38.4\% without it. 
There is only small improvement in segmentation mIOU when replacing constrain-to-boundary loss by CRF loss.
However, the direct loss layer is several times faster than SEC integrating explicit proposal layer. 
The segmentation accuracy and overall training speed are also reported in Tab. \ref{tab:sectag}. 
The results are for the DeepLab-largeFOV network since it is fast to train. 
We also tested a variant of SEC without (CRF) proposal layer back-propagation, which
we show is redundant in practice.

Fig. \ref{fig:tagexamples} shows testing examples for our method and SEC with image tags as supervision. Using direct loss rather than the constrain-to-boundary loss gives similar segmentation, while being faster to train since no inference is needed.

\begin{table}[t]
\centering
\begin{tabular}{| l | l | c |c| c| c |}
\hline
\multicolumn{2}{|c|}{}   & \multicolumn{4}{|c|}{include this loss?}                  \\ \hline
\multirow{ 4}{*}{Losses} & Seeding loss \cite{kolesnikov2016seed}    & \checkmark    & \checkmark    &                       \checkmark&            \checkmark                \\
&Expansion loss \cite{kolesnikov2016seed}    & \checkmark    & \checkmark    &   \checkmark                    &                      \checkmark       \\
&Constrain-to-boundary loss \cite{kolesnikov2016seed}    &     & \checkmark    &     $\star$                 &                                \\
&Our direct CRF loss    &     &     &                      &        \checkmark                     \\ \hline
\multicolumn{2}{|c|}{mIOU (\%)}   & 38.4    & 43.7    &        43.8              &        43.9                 \\ \hline
\multicolumn{2}{|c|}{Overall training time  in s/batch}     & 0.86   & 1.19 (0.33)   &  1.19  (0.33)                   &  0.98 (0.12)          \\ \hline
\end{tabular}
\caption{Tag-based weak supervision. We train with different combinations of the losses in SEC \cite{kolesnikov2016seed} and our CRF loss. Replacing the constrain-to-boundary loss in SEC \cite{kolesnikov2016seed} by direct CRF loss gives minor improvement in accuracy, but training with our direct loss is faster since no iterative CRF inference is needed. 
We also compare to a variant ($\star$) of SEC without back-propagation of the CRF inference layer. 
Parenthesis $(\cdot)$ show the computational times for the constrain-to-boundary loss layer or our direct loss layer. 
}
\label{tab:sectag}
\end{table}

To see the limit of our algorithm with scribble supervision, we train with shortened scribbles visualized in Fig. \ref{fig:scribblevis}. Note that with length zero, there is only one click or spot for each object. For different length ratios from zero to 100\%, our direct loss method achieved much better segmentation than ScribbleSup \cite{scribblesup}, see Fig. \ref{fig:shortscribblecurve}. The improvement over ScribbleSup \cite{scribblesup} is more significant for shorter scribbles or even clicks.

\begin{figure}[h!]
    \centering
    \captionsetup[subfigure]{labelformat=empty}
      \begin{subfigure}{0.22\textwidth}
        \includegraphics[width=1\textwidth]{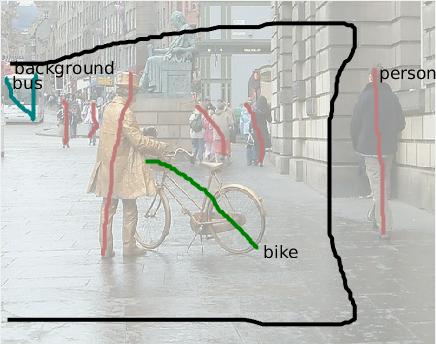}
        \caption{length 100\%}
      \end{subfigure}
       \begin{subfigure}{0.22\textwidth}
        \includegraphics[width=1\textwidth]{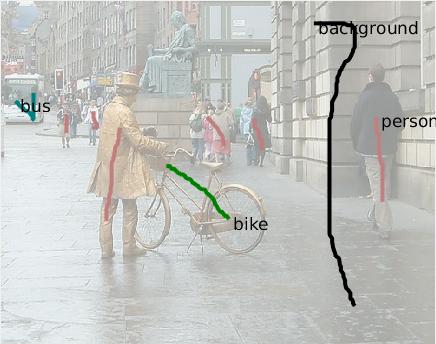}
        \caption{length 50\%}
      \end{subfigure}
       \begin{subfigure}{0.22\textwidth}
        \includegraphics[width=1\textwidth]{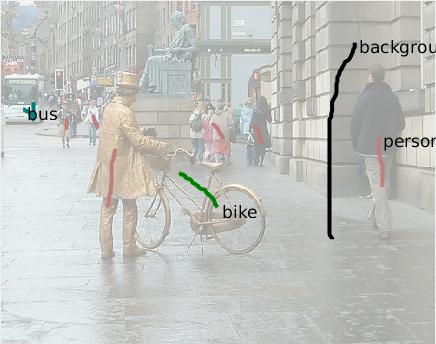}
        \caption{length 30\%}
      \end{subfigure}
       \begin{subfigure}{0.22\textwidth}
        \includegraphics[width=1\textwidth]{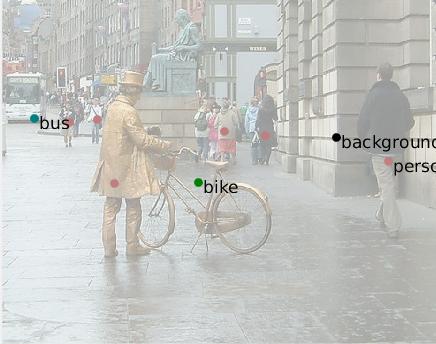}
        \caption{length 0\% (click)}
      \end{subfigure}
   \caption{Similar to \cite{scribblesup}, we shorten the scribbles with different length ratios. With length zero (clicks) is the most challenging case for training.}
   \label{fig:scribblevis}
\end{figure}

\begin{figure}[b!]
    \centering
    \includegraphics[width=0.34\textwidth]{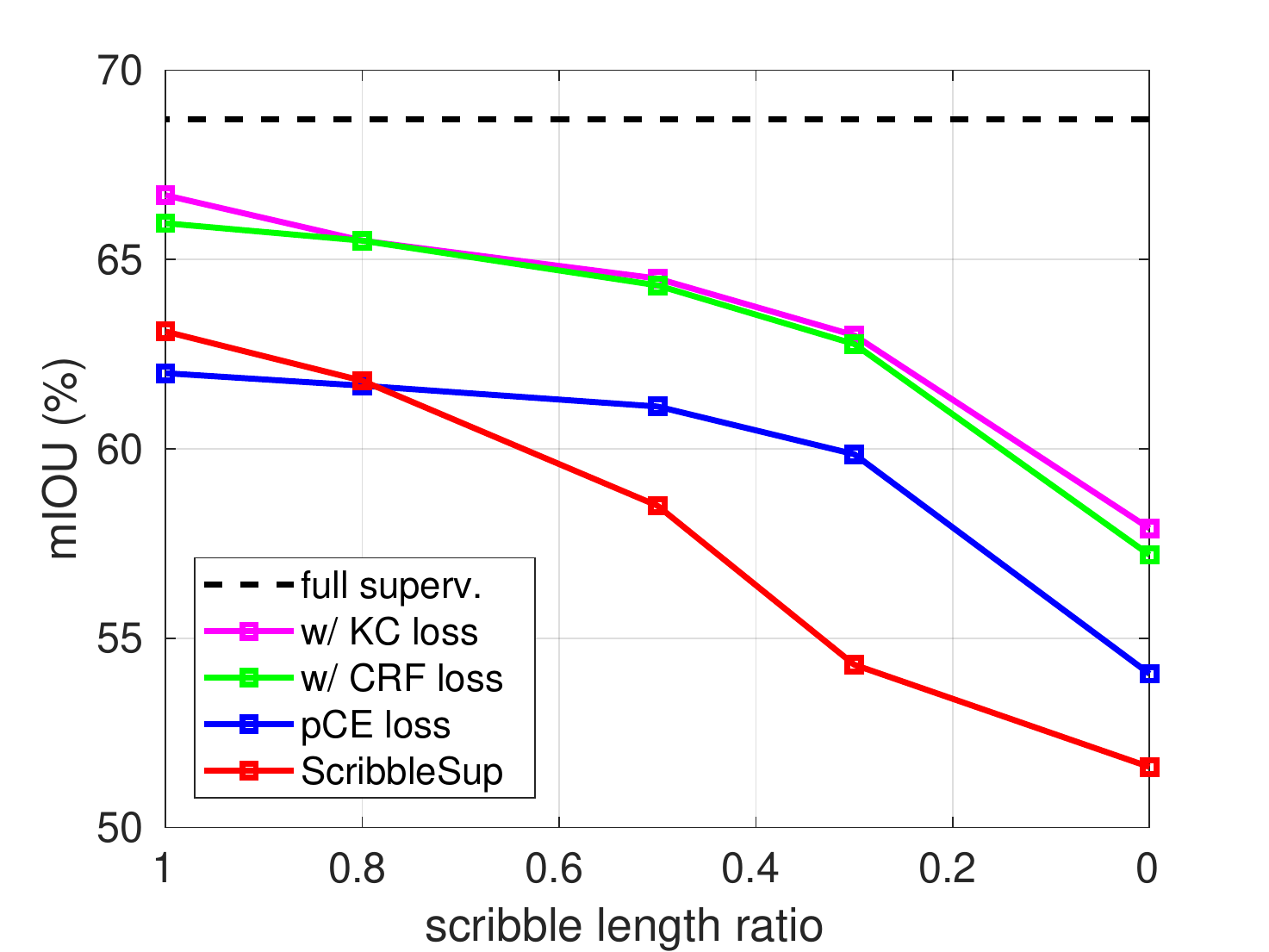}
   \caption{mIOU on \textit{val} set when train with shorter scribbles or clicks (length zero).}
   \label{fig:shortscribblecurve}
\end{figure}

\subsection{Fully and semi supervised segmentation}
\label{sec:fullyandsemi}
We've demonstrated the usefulness of regularized loss for weakly supervised segmentation. Here we test if it also helps full supervision or semi-supervision with extra unlabeled images. For full supervision, we add NC loss on labeled masks besides the cross entropy loss. This experiment is on a simple saliency dataset \cite{ChengPAMI} where color clustering is obvious and likely to help. As shown in Tab. \ref{tab:fullsupervision}, when we increase the weight of $R_{NC}(S)$, we indeed obtained segmentation that is more regularized. However, with extra regularization loss during training, the cross entropy loss got worse and mIOU decreased. The conclusion is that imposing regularized loss naively on labeled images doesn't help fully supervision segmentation. Empirical risk minimization is in some sense optimal for fully labeled data. Extra regularization loss steers the network in the wrong direction if the regularization doesn't totally agree with the ground truth. Reporting this result though negative helps to complete our investigation of regularized loss for fully, weakly and semi-supervised settings.

\begin{table}[t]
\centering
\begin{tabular}{| c | c | c | c |}
\hline
 NC loss weight    &        mIOU              &        cross entropy loss        & NC loss          \\ \hline
  0    &        89.85\%             &        0.106        & 0.536          \\ \hline
   0.05    &        89.53\%              &      0.108        & 0.526          \\ \hline
    0.1   &        89.38\%             &        0.110        & 0.517          \\ \hline
     0.2    &        89.39\%             &        0.112       & 0.509          \\ \hline
      0.5    &        88.75\%             &        0.125       & 0.485          \\ \hline
\end{tabular}
\caption{Negative effect of regularization loss for full supervision.}
\label{tab:fullsupervision}
\end{table}

For training with both labeled images and unlabeled images, our joint losses include cross entropy on labeled images and regularization on unlabeled ones. The 11K labeled images are from PASCAL VOC 2012 and the 10K unlabeled ones are from VOC 2007. We train DeepLab-LargeFOV with different amount of labeled \& unlabeled images, see Tab. \ref{tab:semisupervision}. For the baseline that can only utilize labeled images, the performance degrades with less masks, as expected. For our framework, the labeled and unlabeled images are mixed and randomly sampled in each batch. We observed 0.7\%~1.9\% improvement with our regularized loss. Note that this result is highly preliminary and detailed analysis of overfitting, generalization property and comparison to recent semi-supervised segmentation \cite{hung2018adversarial} with extra unlabeled images will be our future work.

\begin{table}[b]
\centering
\begin{tabular}{| c | l | c | c |c| c| c |}
\hline
\multirow{ 2}{*}{training data} & \# of labeled images&  11K & 11K    & 7K    &        5K              &        3K                 \\ \cline{2-7}
&\# of unlabeled images &  10K  & 0    & 4K    &  6K                     &  8K          \\ \hline
\multirow{ 2}{*}{losses}  &cross entropy only &  63.5\%  & 63.5\%    & 61.5\%    &  60.1\%                     &  57.6\%          \\ \cline{2-7}
&cross entropy + CRF reg. &  64.6\%  & 63.5\%    & 63.4\%    &  61.8\%                     &  58.3\%          \\ \hline
\end{tabular}
\caption{Our regularization loss $R_{CRF}(S)$ on unlabeled images help to improve semi-supervised segmentation.}
\label{tab:semisupervision}
\end{table}
\section{Conclusion and Future Work}
\label{conclusion}
\textit{Regularized semi-supervised loss} is a principled approach to semi-supervised deep learning \cite{weston2012deep,goodfellow2016deep}, in general. We utilize such principle for weakly supervised CNN segmentation. In particular, this paper is continuation of the study of losses motivated by standard shallow segmentation \cite{ncloss:cvpr18}. While \cite{ncloss:cvpr18} is entirely on normalized cut loss, in this paper we propose and evaluate several {regularized loss} for weakly-supervised CNN segmentation based on Potts/CRF \cite{BVZ:PAMI01,koltun:NIPS11}, normalized cut \cite{Shi2000} and KernelCut \cite{NC-MRF:eccv16} regularizer. DenseCRF \cite{koltun:NIPS11} is very popular as post-processing \cite{deeplab} or trainable layer \cite{CRFlayersjournal} for CNN segmentation. We are the first to use a relaxed version of DenseCRF directly as part of the loss.

In contrast to our direct regularized loss approach, the main stream in weakly supervised segmentation rely on generating "fake" full masks from partial input and train a network to match the proposals \cite{scribblesup,Chandraker2017,simpledoesit,kolesnikov2016seed,papandreou2015weakly,dai2015boxsup}. Proposals can be pre-computed or iteratively updated. Some work even back-propagate the proposal generation step \cite{Chandraker2017,kolesnikov2016seed}. We show that proposal methods can be viewed as approximate alternating direction method (ADM) for optimization of our direct loss. Using direct loss gives better optimization while being more efficient than proposal generation scheme since no CRF inference is needed.

This paper pushes the limit of weakly-supervised segmentation. Comprehensive experiments (Sec.\ref{sec:experiments}) with our regularized weakly supervised losses  show
(1) state-of-the-art performance for weakly supervised CNN segmentation reaching near full-supervision accuracy and
(2) better quality and efficiency than proposal generating methods or normalized cut loss \cite{ncloss:cvpr18}.
Alternating schemes (proposal generation) give higher loss at convergence. Besides for weak supervision, we also report our preliminary results for full and semi-supervision with unlabeled images.

In principle, any differentiable loss function fits our regularized loss framework. Exploring other relaxations of CRF as losses \cite{ChambolleDarbon:TV2009,Nikolova:SIAM06,pock:CVPR09,Couprie:PAMI11,kumar2016,kumar2017} and corresponding efficient gradient computation is left for future work. Also it would be interesting to apply our CRF regularized loss framework for weakly-supervised computer vision problems other than segmentation.

\appendix
\section{Mean-field inference for DenseCRF}
\label{sec:meanfield}
Here we show that the iterative parallel mean-field inference \cite{koltun:NIPS11} indeed minimizes \eqref{proposal-CRF-negative-entropie} with pairwise DenseCRF regularizer and unary potentials $\tilde{S}_p$ (e.g. given by network). 
\begin{equation}
E(X) =   \sum_{p} {H}(X_p,\tilde{S}_p) + \lambda R_{CRF}(X) -\sum_{p} H(X_p). \nonumber
\end{equation}
For positive semidefinite affinity matrix $W$, e.g. with Gaussian Kernel, 
$$R_{CRF}(X)\;\;=\;\;\sum_{k} {X^{k'}{W}({\mathbf1}-X^k)}\;\;\utc\;\; -\sum_{k} {X^{k'}{W}X^k}$$ 
is concave\footnote{\utc means up to an additive constant.}. 
Since the cross entropy $H(X_p,\tilde{S}_p)$ is linear and the negative entropy $-H(X_p)$ is convex w.r.t. $X_p$, 
the concave-convex procedure (CCCP) can iteratively solve an approximation of $E(X)$ by linearizing the concave part at $\tilde{S}$.
\begin{eqnarray*}
A(X) &=&   \sum_{p} {H}(X_p,\tilde{S}_p) + \lambda\langle X, \triangledown R_{CRF}(X)|_{\tilde{S}} \rangle   -\sum_{p} H(X_p) . \nonumber \\
 &=&  \sum_{p} {H}(X_p,\tilde{S}_p) -2\lambda \sum_p \sum_k X_p^k \cdot [W\tilde{S}^k]_p   -\sum_{p} H(X_p). \nonumber \\
  &=& -\sum_k X_p^k \cdot \log \tilde{S}_p^k -2\lambda \sum_k X_p^k \cdot [W\tilde{S}^k]_p  -\sum_{p} H(X_p) . \nonumber \\
\end{eqnarray*}
KKT approach for minimizing $A(X)$ subject to probability simplex constraints $\sum_k X_p^k=1$ yields the following optima,
\begin{equation}
\label{eq:meanfieldupdate}
\arg \min_X A(X) = \frac{1}{z_p}\exp\{-\log \tilde{S}_p^k - 2\lambda[W\tilde{S}^k]_p \},
\end{equation}
where $z_p$ is a normalization constant for softmax. \eqref{eq:meanfieldupdate} is exactly the mean-field update for dense CRF \cite{koltun:NIPS11}. 
Note that the updates \eqref{eq:meanfieldupdate} is also justified in a similar way in \cite{Kraehenbuehl2013} for convergent optimization of 
KL distance between factorial marginal distribution and Gibbs distribution induced by CRF. 
Our justification of \eqref{eq:meanfieldupdate} is different. We show alternative interpretation of 
mean-field updates \eqref{eq:meanfieldupdate} as minimizing CRF potential $R_{CRF}(X)$ plus negative entropy $-H(X)$.

\bibliographystyle{splncs}
\bibliography{egbib}
\end{document}